\def\Z{{\mathbb Z}}
\begin{document}
%
\title{An analysis of $N\!K$ and generalized $N\!K$ landscapes }
%
%
%

\author{Jeffrey~Buzas
        and~Jeffrey~Dinitz
\thanks{J. Buzas and J. Dinitz are with the Department
of Mathematics and Statistics, University of Vermont, Burlington,
VT, 05401 USA e-mail: (see http://www.cems.uvm.edu/~jbuzas).}
\thanks{Manuscript February 14, 2013;}}

%
%

\markboth{ }%
{Shell \MakeLowercase{\textit{et al.}}: Bare Demo of IEEEtran.cls for Journals}
%



\newcommand{\NK}{N\!K}
\newtheorem{prop}{Proposition}
\newtheorem{result}{Result}
\newtheorem{example}[prop]{Example}

\maketitle

\begin{abstract}
Simulated landscapes have been used for decades to evaluate search strategies whose goal is to find the landscape location with maximum fitness. Applications  include modeling the capacity of enzymes to catalyze reactions and the clinical effectiveness of medical treatments.
Understanding properties of landscapes is important for understanding search difficulty.
This paper presents a novel and transparent characterization of $\NK$ landscapes.

We prove that $\NK$ landscapes can be represented by parametric linear interaction models where model coefficients have meaningful interpretations. We derive the statistical properties of the model coefficients, providing insight into how the $\NK$ algorithm parses importance to main effects and interactions.   An important insight derived from the linear model representation is that the rank of the linear model defined by the $\NK$ algorithm is correlated with the number of local optima, a strong determinant of landscape complexity and search difficulty. We show that the maximal rank for an $\NK$ landscape is achieved through epistatic interactions that form partially balanced incomplete block designs.  Finally,  an analytic expression representing the expected number of local optima on the landscape is derived, providing a way to quickly compute the expected number of local optima for very large landscapes.
\end{abstract}

\begin{IEEEkeywords}
Balanced Incomplete Block Design, Orthant Probability, Walsh function
\end{IEEEkeywords}

%
\IEEEpeerreviewmaketitle

\section{Introduction}
%
%
%
%
\IEEEPARstart{S}{imulated}
landscapes have been used for several decades to evaluate search strategies whose goal is to find the landscape location with maximum fitness. Applications of simulated landscapes include modeling the capacity of enzymes to catalyze reactions or ligands to bind to proteins, and the clinical effectiveness of medical treatments~\cite{kauffman2},~\cite{eppstein}.
Understanding properties of landscapes is important for understanding search difficulty.

 $\NK$ landscapes are defined by a straightforward, tunable algorithm  where $N$ specifies the number of binary features or loci and $K$ the degree of epistatic interactions among the loci~ \cite{kauffman1}.
$\NK$ landscapes are convenient because there are only two tunable parameters ($N$ and $K$), and yet they provide a very rich set of landscapes.
  There is a large literature exploring properties of $\NK$ landscapes, though there are few analytic results, with~\cite{evans} and~\cite{kaul} notable exceptions.

This paper provides a novel perspective on $\NK$ landscapes and generalizations of $\NK$ landscapes,  proving that these landscapes can be characterized by parametric linear models  comprised of main effects and interaction effects where the model coefficients have meaningful interpretations. The $\NK$ algorithm induces a statistical distribution on the parametric model coefficients.  We derive the distribution of model coefficients, showing how the $\NK$
algorithm, for $K\ll N$, automatically assigns the largest expected magnitude to main effects, with the expected magnitude of interaction effects typically decreasing with increasing order of interaction.

The linear model representation of the landscape suggests that model rank should provide a measure of landscape complexity.    A simple method of assessing rank is provided, and we determine conditions on $N$ and $K$ sufficient for the existence of designs that achieve maximum rank. We then show that the expected number of local optima is proportional to an orthant probability, which can be calculated with reasonable speed and accuracy for very large landscapes. Interestingly and surprisingly, rank is both positively {\it and } negatively correlated with the number of local optima.  For fixed $N$, it is well-known \cite{eremeev} that the number of local optima increases with $K$ (i.e. a positive correlation with landscape rank). We show that when $N$ and $K$ are both fixed, there is a strong {\it negative} correlation between rank and number of local optima for classic $\NK$ landscapes.  The statistical distribution of model effect coefficients provides an explanation for this counter intuitive phenomenon.

  In related work, a proposal to use linear models with main effects and interactions to construct landscapes was explored in~\cite{reeves1},~\cite{reeves2} and~\cite{reeves3}. These authors examined the effect of epistatic interactions on the properties of landscapes using metrics common in the experimental design literature, and they noted the equivalence between Walsh function decompositions and interaction models.   An analysis of $\NK$ landscapes using Walsh functions was given in  \cite{heckendorn}.

\section{Generalized $\NK$ Landscapes and Interaction Models}
\label{sec:generalizednk}

In this section we define the $\NK$ landscape and interaction models.    The models are represented as linear models in matrix form, as this representation facilitates the study of model properties.  For simplicity, the model is first discussed for $K$ constant across loci.  The model is then generalized to allow varying $K$ across loci.

\subsection{$\NK$ landscapes}

A general landscape is defined by a triple $(\chi,d,f)$ where $\chi$ is a set of locations, $d$ is a distance measure and
$f: \chi \rightarrow \mathbb{R}$ is a ``fitness" function.
$\NK$ landscapes are a map from $\chi=(\{0,1\})^N$ to $\mathbb{R}$ where the fitness function $f$ is built from $N$ binary loci and epistatic interactions formed between each locus and $K$ other loci where $K$ can range from $0$ to $N-1$.

To describe $\NK$ landscapes more fully, for each $i=1,\dots,2^N$ let $\mathbf{x_i}\in (\{0,1\})^N$ represent $i$ written in base 2 and represented as a (binary) vector of length $N$.   For $i=1,\dots,N$, let $\mathbf{w_i}$ denote a $2^{K+1}\times 1$ vector of independent random weights, where each component has mean $\mu/N$ and variance $\sigma^2/N>0$ and otherwise arbitrary probability distribution.
 Let $\mathbf{f_i}(\mathbf{x})$ denote a function from $(\{0,1\})^N\rightarrow E^{2^{K+1}}$ where $E^{2^{K+1}}$ denotes unit coordinate vectors in $\mathbb{R}^{2^{K+1}}$, i.e. vectors of the form $(0,0,1,0,\dots,0)$.   The definition of the function $\mathbf{f_i}(\mathbf{x})$ depends on the epistatic interactions to the $i$th locus.  The function is defined explicitly below.

For $j=1,\dots, 2^N$, define $p_j=\sum_{i=1}^N \mathbf{w_i}^T \mathbf{f_i}\mathbf{(x_j)}$.  Note that $p_j$ is the landscape fitness at location $\mathbf{x_j}$ and that $E[p_j]=\mu$ and $\mbox{Var}(p_j)=\sigma^2$ as each $p_j$ is comprised of a sum of $N$ independent weights. The $\NK$ landscape is defined by the fitness-location pairs $(p_j,\mathbf{x_j})$ coupled with Hamming distance.

\subsection{Matrix representation of generalized $\NK$ landscapes  }
\label{subsec:matrixnk}

  We begin by providing an explicit description for the construction and representation of $\NK$ and generalized $\NK$ landscapes as linear models in matrix form. For $i=1,\dots, N$, let $V_i=\{i_1,i_2,\dots,i_{K_i+1}\}$  where $\{i_1,i_2,\dots,i_{K_i+1}\} \subset \{1,2,\dots,N\}$ and $i_1<i_2<\cdots <i_{K_i+1}$ with $i_j=i$ for some $j$. $V_i$ denotes the $i$th interaction set, comprised of locus $i$ and $K_i$ loci that interact with locus $i$. Note that $K$ isn't restricted to be constant across loci, and there are no restrictions on the number of times locus $i$ can appear in the interaction sets.  This represents a generalization of $\NK$ landscapes, with the classic $\NK$ landscape occurring as the special case when $K_i\equiv K$ and each locus appears in exactly $K+1$ interaction sets.  An additional generalization would
  be to not require locus $i$ to be a member of $V_i$.  The results in this article would still hold for this generalization.

  With a slight abuse of notation, for each $i$ write $\mathbf{f_i}\mathbf{(x)}=\mathbf{f_i}(x_{i_1},\dots,x_{i_{K_i+1}})$ where $x_{i_1},\dots,x_{i_{K_i+1}}$ are the elements of $\mathbf{x}$ corresponding to $V_i$.
  Thinking of $\{x_{i_1},\dots,x_{i_{K_i+1}}\}$ as a binary number, let $e_i(x_{i_1},\dots,x_{i_{K_i+1}})$ be the decimal representation of this number. Define the $1\times 2^{K_i+1}$ vector $\mathbf{f_i}(x_{i_1},\dots,x_{i_{K_i+1}})=(0,\dots,0,1,0,\dots,0)$ where the 1 occurs in column $e_i(x_{i_1},\dots,x_{i_{K_i+1}})+1$.
  Alternatively, if $I_{2^{K_i+1}}$ is the identity matrix of size $2^{K_i+1}$, then $\mathbf{f_i}(x_{i_1},\dots,x_{i_{K_i+1}})$ is the $(e_i(x_{i_1},\dots,x_{i_{K_i+1}})+1)$th row of $I_{2^{K_i+1}}$.

To write the generalized $\NK$ model in matrix form, consider the $1\times C $ vector $\mathbf{f(\mathbf{x})}=\mathbf{f_1}(\mathbf{x})\mid \mathbf{f_2}(\mathbf{x})\mid \allowbreak\dots\mid \mathbf{f_N}(\mathbf{x})$ where
$C= \sum_{i=1}^N2^{K_i+1}$ and $\mid$ denotes column concatenation.  Define the $2^N\times C $ {\it model matrix}
\begin{equation}\label{nkmatrix}
F=\left( \begin{array}{ccc}
\mathbf{f(\mathbf{x_1})} \\
\mathbf{f(\mathbf{x_2})} \\
\vdots \\
\mathbf{f(\mathbf{x_{2^N}})}
\end{array} \right).
\end{equation}
With this definition, the generalized $\NK$ landscape is $\mathbf{p}=F\mathbf{w}$ where $\mathbf{w}=(\mathbf{w_1}^T\mid\dots\mid\mathbf{w_N}^T)^T$ denotes the $C\times 1$ vector of independent random fitnesses.

\textbf{Example:} To illustrate the definition of $\mathbf{f_i}$ and the matrix $F$, suppose $N=3$, $K_i=1$ for $i=1,2,3$ and $V_1=\{1,2\}$, $V_2=\{2,3\}$ and $V_3=\{1,3\}$.  Then

\[ \mathbf{f}_1(x_1,x_2) = \left\{ \begin{array}{llll}
         (1,0,0,0) & \mbox{if $x_1=0 , x_2= 0$};\\
        (0,1,0,0) & \mbox{if $x_1=0 , x_2= 1$};\\
        (0,0,1,0) & \mbox{if $x_1=1 , x_2= 0$};\\
        (0,0,0,1) & \mbox{if $x_1=1 , x_2= 1$}.\end{array} \right. \]
The functions $\mathbf{f}_2(x_2,x_3)$ and $\mathbf{f}_3(x_1,x_3)$ are defined similarly.  The matrix $F$ is then

\smallskip
\[
F=
\left (\begin{array}{rrrrrrrrrrrr}
1&0&0&0&1&0&0&0&1&0&0&0\\
1&0&0&0&0&1&0&0&0&1&0&0\\
0&1&0&0&0&0&1&0&1&0&0&0\\
0&1&0&0&0&0&0&1&0&1&0&0\\
0&0&1&0&1&0&0&0&0&0&1&0\\
0&0&1&0&0&1&0&0&0&0&0&1\\
0&0&0&1&0&0&1&0&0&0&1&0\\
0&0&0&1&0&0&0&1&0&0&0&1\\
\end{array} \right).
\]

\smallskip

To our knowledge, $\NK$ landscapes have never been formalized using the matrix representation given here. The rank of $F$ is a measure of the richness of the landscape, as it gives the dimension of the domain for the vector $\mathbf{p}$.   We will show that the rank of $F$ is determined by $N$, $\{K_i\}_{i=1}^N$ and the structure of the interaction sets $\{V_i\}_{i=1}^N$, and that using rank as a measure of complexity provides refinement beyond using only $N$ and $\{K_i\}_{i=1}^N$.

\subsection{Interaction Model   }
\label{sec:interactmodel}

Here we define the general form of parametric interaction models, and in the next section relate them to $\NK$ landscapes. Statisticians have long employed interaction models to study the effects of multiple `treatments' on an outcome, see for example~\cite{montgomery}.
Interaction models are straightforward to define and model parameters have meaningful interpretations.
In the evolutionary computing literature, these models seem to have received little attention, with the exception of~\cite{reeves1},\cite{reeves2} and~\cite{reeves3}.

To define a general interaction model, for $k=1,\dots,L$, let $U_k\subset\{1,2,\dots,N\}$ and define $\tilde x_i=2x_i-1$.   Mathematical properties of the interaction model and interpretation of model parameters are most easily obtained using the transformed values $\tilde x_i \in \{-1,1\}$ where as before $x_i\in\{0,1\}$.

The general form of an interaction model with $L$ terms is

\[
q(\mathbf{\tilde x}) = \sum_{k=1}^{L}\beta_{U_k}\prod_{j\in U_k} \tilde x_j
\]
where the $\beta$'s are coefficients that can take any value in $\mathbb{R}$ and where we adopt the convention that when $U_k=\emptyset$, $\prod_{j\in U_k}\tilde x_j\equiv 1$.

\bigskip

\textbf{Example:} Consider a model with $N=3$ loci and $U_1=\emptyset$, $U_2=1, U_3=2, U_4=3, U_5=\{1,2\},U_6=\{1,3\}, U_7=\{2,3\}$. The interaction model is $q(\mathbf{\tilde x})=\beta_\emptyset+\beta_1\tilde x_1 +\beta_2\tilde x_2
+\beta_3\tilde x_3+\beta_{12}\tilde x_1\tilde x_2+\beta_{13}\tilde x_1\tilde x_3+\beta_{23}\tilde x_2\tilde x_3$. In this example, $\beta_1,\beta_2$ and $\beta_3$ are main effects coefficients while $\beta_{12},\beta_{13}$ and $\beta_{23}$ are two-loci interaction coefficients.

 The intercept ($\beta_\emptyset$) and coefficients of the main effects and interaction terms have meaningful interpretations. The intercept coefficient represents the average of the fitness values across the entire landscape.  For the general interaction model, it is not difficult to show that the main effect $2\beta_i$ represents the difference in fitness values when locus $i$ is varied between $\tilde x_i=1$ and $\tilde x_i=-1$, averaged over the values of the other $N-1$ loci.   $4\beta_{ij}$ represents the difference of differences between fitness values for $\tilde x_i=1$ and $\tilde x_i=-1$ when $\tilde x_j=1$ and $\tilde x_j=-1$, averaged over all other $\tilde x_k$ for $k\ne i,j$.
In general, higher order interaction coefficients are interpreted as average differences between lower order interactions.  For example, $8\beta_{ijk}$  represents the  average difference in the two factor interaction between loci $i$ and $j$ when $\tilde x_k=1$ and $\tilde x_k=-1$.

\section{Generalized $\NK$ Landscapes as Interaction Models}
\label{sec:nkinteract}

Here we show that generalized $\NK$ landscapes can be expressed as linear interaction models, and that the interactions that are included in the model are completely  determined by the interaction sets $V_i$, $i=1,\dots, N$. We also derive the statistical properties of the interaction model coefficients and show how to construct classic $\NK$ landscapes
that maximize the number of interaction terms.

Let $2^{V_i}$ denote the power set for $V_i$, i.e. the set $V_i$ and all
it's subsets, including the empty set. Let $T=\bigcup_{i=1}^N2^{V_i}$, and consider the interaction model
\begin{equation}\label{intmodel}
p(\mathbf{\tilde x})=\sum_{U\in T }\beta_U\prod_{j\in U}\tilde x_j.
\end{equation}
  Evaluated at the $2^N$ transformed values $\mathbf{\tilde x_1},\dots,\mathbf{\tilde x_{2^N}}$, the model can be written in matrix notation as $\mathbf{p}=\tilde F\mathbf{\beta}$ where $\tilde F$ is an appropriately defined $2^N\times L$ matrix, $L$ is the number of elements in $T$ and $\mathbf{\beta}$ is an $L\times 1$ vector of coefficients.  The random vector $\mathbf{\beta}$ of main effects and interactions has distributional properties dependent on the probability distribution of the vector of weights $\mathbf{w}$ and the structure of the interaction sets $V_i$. The distributional properties of $\mathbf{\beta}$ are studied in section~\ref{subsec:coeffproperties}.  Note that the model defined in \eqref{intmodel} always contains an intercept and all main effects terms.

\bigskip

\textbf{Example}: Consider again the example in Section~\ref{subsec:matrixnk}, where $N=3$, $K_i=1$ for $i=1,2,3$ and $V_1=\{1,2\}$, $V_2=\{2,3\}$ and $V_3=\{1,3\}$.  Then $T=( \{\emptyset\},\{1\},\{2\},\{3\},\{1,2\},\{2,3\},\{1,3\} )$ and

\[\footnotesize
\tilde F=\left ( \begin{array}{rrrrrrr}
1 & -1 & -1 & -1 & 1 & 1 & 1 \\
1 & -1 & -1 &  1 & 1 & -1 & -1 \\
1 & -1 &  1 & -1 & -1 & -1 & 1 \\
1 & -1 &  1 &  1 & -1 & 1 & -1 \\
1 &  1 & -1 & -1 & -1 & 1 & -1 \\
1 &  1 & -1 &  1 & -1 & -1 & 1 \\
1 &  1 &  1 & -1 & 1 & -1 & -1 \\
1 &  1 &  1 &  1 & 1 & 1 & 1 \\
\end{array} \right),
\
\mathbf{\beta}=\left ( \begin{array}{r}  \beta_\emptyset \\
                                         \beta_1 \\ \beta_2 \\ \beta_3 \\ \beta_{12} \\
                                         \beta_{23} \\ \beta_{13} \\
\end{array} \right ).
\]
The first column of $\tilde F$ corresponds to the intercept $\beta_\emptyset$, columns two through four are for the main effects, and columns five through seven are for the two loci interactions.  Column five, for example, corresponds to the interaction between loci 1 and 2, and is obtained by taking the product of the elements of columns two and three, which correspond to the main effects for loci 1 and 2.

The following proposition establishes that a generalized $\NK$ landscape defined by interaction sets $V_1,\dots, V_N$ is equivalent to the interaction model
given by \eqref{intmodel}, thereby clearly establishing the nonzero interaction effects induced by the $\NK$ algorithm.    The proof is given in the appendix.

\begin{prop}\label{prop:nkinteq}
Let $\mathbf{F}$  denote the model matrix for  the generalized $\NK$ landscape model defined in \eqref{nkmatrix},
  and $\mathbf{\tilde F}$ the model matrix for the interaction model defined in \eqref{intmodel}. Then $\mathcal{C}(\mathbf{F})= \mathcal{C}(\mathbf{\tilde F})$ where $\mathcal{C}(\cdot)$ denotes column space.
\end{prop}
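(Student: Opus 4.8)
The plan is to introduce a common bridging object and identify both column spaces with it. For each $i$, let $\mathcal{F}_i\subseteq\mathbb{R}^{2^N}$ be the subspace of evaluation vectors of functions $g:\{0,1\}^N\to\mathbb{R}$ that depend only on the coordinates indexed by $V_i$. I will prove the two equalities $\mathcal{C}(\mathbf{F})=\sum_{i=1}^N\mathcal{F}_i$ and $\sum_{i=1}^N\mathcal{F}_i=\mathcal{C}(\mathbf{\tilde F})$ separately; together they give the proposition.

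The first equality is immediate from the block structure of $F$. The columns of the $i$th block $\mathbf{f_i}$ are precisely the indicator vectors $\mathbf{1}[x_{V_i}=c]$, one for each of the $2^{K_i+1}$ configurations $c$ of the loci in $V_i$. These indicators have pairwise disjoint supports, hence are linearly independent, and each depends only on $V_i$; since $\dim\mathcal{F}_i=2^{\lvert V_i\rvert}=2^{K_i+1}$, they form a basis of $\mathcal{F}_i$. Thus the span of the $i$th block is exactly $\mathcal{F}_i$, and $\mathcal{C}(\mathbf{F})=\sum_{i=1}^N\mathcal{F}_i$.

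For the second equality, write $\chi_U$ for the vector obtained by evaluating the Walsh monomial $\prod_{j\in U}\tilde x_j$ at the $2^N$ points, so that the columns of $\tilde F$ are exactly $\{\chi_U:U\in T\}$. The heart of the argument, and the step I expect to carry the real content, is the characterization $\mathcal{F}_i=\operatorname{span}\{\chi_U:U\subseteq V_i\}$. This rests on the fact that $\{\chi_U:U\subseteq\{1,\dots,N\}\}$ is an orthogonal basis of $\mathbb{R}^{2^N}$ (since $\chi_U\chi_{U'}=\chi_{U\triangle U'}$ and $\sum_x\chi_V(x)=0$ unless $V=\emptyset$). Expanding $g\in\mathcal{F}_i$ in this basis, its $U$th Walsh coefficient, a multiple of $\langle g,\chi_U\rangle$, vanishes whenever $U\not\subseteq V_i$: choosing $k\in U\setminus V_i$ and summing over $x_k$, on which $g$ does not depend, annihilates the factor $\tilde x_k$. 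Conversely, any combination of $\chi_U$ with $U\subseteq V_i$ manifestly depends only on $V_i$, which gives the characterization.

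Finally, combining the characterization with $T=\bigcup_{i=1}^N 2^{V_i}$ yields $\sum_{i=1}^N\mathcal{F}_i=\operatorname{span}\{\chi_U:U\in\bigcup_i 2^{V_i}\}=\operatorname{span}\{\chi_U:U\in T\}=\mathcal{C}(\mathbf{\tilde F})$, the last equality because the $\chi_U$, $U\in T$, are the columns of $\tilde F$ and are independent as a subset of an orthogonal system. Chaining the two equalities gives $\mathcal{C}(\mathbf{F})=\mathcal{C}(\mathbf{\tilde F})$. The main obstacle is purely the Walsh-support lemma of the third paragraph; the block identification and the final assembly are bookkeeping.
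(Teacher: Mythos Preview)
Your proof is correct, and it takes a different (and arguably cleaner) route than the paper's. The paper also works block by block but organizes the argument around matrices rather than function spaces: it introduces an overparameterized matrix $\tilde F^*$ whose $i$th block lists \emph{all} Walsh monomials $\prod_{j\in U}\tilde x_j$ for $U\subseteq V_i$ (with repetitions across blocks), so that $\mathcal{C}(\tilde F^*)=\mathcal{C}(\tilde F)$ trivially; then, for each $i$, it observes that restricting to the $2^{K_i+1}$ configurations of $V_i$ makes both the indicator block and the Walsh block into full-rank $2^{K_i+1}\times 2^{K_i+1}$ matrices, hence with equal column space, and finally lifts this equality to $\mathbb{R}^{2^N}$ by a ``row-repetition preserves column-space equality'' lemma. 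Your approach replaces the overparameterized $\tilde F^*$ and the row-repetition device by the single semantic object $\mathcal{F}_i$ (functions depending only on $V_i$) and identifies both blocks with it directly in the ambient space, the nontrivial half of the identification being your Walsh-support lemma. What the paper's route buys is that it never explicitly invokes orthogonality in $\mathbb{R}^{2^N}$ (full rank of the small $\tilde a_i$ suffices); what your route buys is a more transparent argument that avoids the auxiliary $\tilde F^*$ and makes immediate why $T=\bigcup_i 2^{V_i}$ is exactly the set of nonzero Walsh coefficients.
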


Equation \eqref{intmodel} shows that the $\NK$ algorithm constructs an interaction model in an interesting way.  Note that the $\NK$ algorithm dictates that the interaction model contain all sub-interactions contained in higher order interactions.  For example, if an $\NK$ landscape has a fourth order interaction between loci $\{1,2,3,4\}$, then it also has the four third-order and six second-order interactions defined by the subsets of the four interacting loci.

However, knowing which interactions have non-zero coefficients is not sufficient to fully understand the structure of $\NK$
landscapes.  A complete understanding requires knowing  the statistical properties of the main effect and interaction coefficients.

\subsection{Induced properties of interaction model coefficients}
\label{subsec:coeffproperties}

While we have established that generalized $\NK$ landscapes can be represented by interaction models and shown how the epistatic interaction sets define the interaction terms
included in the model, additional insight into landscape properties and a complete understanding of the representation  requires knowledge of the distributional properties of the interaction model coefficients.  This is addressed in the next proposition, with proof given in the appendix.

\begin{prop}\label{prop:betaprop}
Consider a generalized $\NK$ landscape defined by interaction sets $V_1,V_2,\dots,V_N$ and given by $\mathbf{p}=Fw=\tilde F\mathbf{\beta}$ with weight vector $\mathbf{w}$ where $E[w_i]=\mu/N$ and Var$[w_i]=\sigma^2/N$.
For $U \in T$, let $\beta_U$ denote the coefficient of the interaction term corresponding to $U$, and $I(\cdot)$ the indicator function, i.e.
\[ I(U\in 2^{V_{i}}) = \left\{ \begin{array}{ll}
         0 & \mbox{if $U\not\in 2^{V_{i}} $};\\
          1 &  \mbox{if $U\in 2^{V_{i}} $}.\end{array} \right. \]

  Then
\[
E\left [ \beta_\emptyset \right ] = \mu
\]
\[
E\left [ \beta_U \right ] = 0, \mbox{\qquad for\qquad} U\neq \emptyset
\]
\[
\mbox{Cov}[\beta_U,\beta_{U^*}]=0, \mbox{\qquad for\qquad} U\neq U^*
\]
and
\[
\mbox{Var} [\beta_U] =  \frac{\sigma^2}{N}\sum_{i=1}^N2^{-(K_i+1)}I(U\in 2^{V_{i}}).
\]
\end{prop}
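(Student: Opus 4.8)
The plan is to use the orthogonality of the Walsh columns of $\tilde F$ to invert the representation $\mathbf p=\tilde F\mathbf\beta$ in closed form, and then substitute $\mathbf p=F\mathbf w$ so that each $\beta_U$ becomes a \emph{fixed} linear functional of the independent weights; all three moment claims then reduce to reading off the coefficients of that functional. First I would note that, since the columns of $\tilde F$ are indexed by distinct sets $U\in T$ with entries $\prod_{k\in U}\tilde x_k$ and $\tilde x_k^2=1$, the inner product of the columns for $U$ and $U^*$ is $\sum_{j=1}^{2^N}\prod_{k\in U\triangle U^*}\tilde x_{j,k}$, which equals $2^N$ if $U=U^*$ and $0$ otherwise; hence $\tilde F^T\tilde F=2^N I_L$. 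By Proposition~\ref{prop:nkinteq} we have $\mathbf p\in\mathcal C(\tilde F)$, so a solution $\mathbf\beta$ exists, and left-multiplying $\mathbf p=\tilde F\mathbf\beta$ by $\tilde F^T$ pins it down uniquely:
\[
\mathbf\beta=\tfrac{1}{2^N}\tilde F^T\mathbf p=\tfrac{1}{2^N}\tilde F^T F\mathbf w .
\]

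The crux is computing the coefficient of each weight in each $\beta_U$. Writing $w_{i,s}$ for the component of $\mathbf w_i$ selected when the loci of $V_i$ are in state $s$, the identity $p_j=\sum_i\mathbf w_i^T\mathbf f_i(\mathbf x_j)$ would give
\[
\beta_U=\frac{1}{2^N}\sum_{i=1}^N\sum_{s}\Big(\sum_{j:\,\mathbf x_j|_{V_i}=s}\prod_{k\in U}\tilde x_{j,k}\Big)w_{i,s}.
\]
For fixed $i,s$ the inner sum runs over the $2^{N-(K_i+1)}$ points whose restriction to $V_i$ is $s$. I would split $U$ into $U\cap V_i$ and $U\setminus V_i$: the factor over $U\cap V_i$ is constant (equal to $\prod_{k\in U}\tilde s_k$, with $\tilde s_k=2s_k-1$, precisely when $U\subseteq V_i$), while the factor over $U\setminus V_i$ is summed over the free loci. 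This partial Walsh sum vanishes unless $U\setminus V_i=\emptyset$, in which case it is $2^{N-(K_i+1)}$, yielding
\[
\beta_U=\sum_{i:\,U\subseteq V_i}2^{-(K_i+1)}\sum_{s}\Big(\prod_{k\in U}\tilde s_k\Big)w_{i,s},
\]
where the constraint $U\subseteq V_i$ is exactly $I(U\in 2^{V_i})=1$.

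The moments then drop out by applying Walsh orthogonality a second time, now over the states $s$ of $V_i$. Using $E[w_{i,s}]=\mu/N$ together with $\sum_{s}\prod_{k\in U}\tilde s_k=2^{K_i+1}$ when $U=\emptyset$ and $0$ otherwise, I get $E[\beta_\emptyset]=\frac{\mu}{N}\sum_{i=1}^N 1=\mu$ and $E[\beta_U]=0$ for $U\neq\emptyset$. For the covariance, independence of the weights gives $\mathrm{Cov}(w_{i,s},w_{i',s'})=\frac{\sigma^2}{N}$ when $(i,s)=(i',s')$ and $0$ otherwise, collapsing the double sum to
\[
\mathrm{Cov}[\beta_U,\beta_{U^*}]=\frac{\sigma^2}{N}\sum_{i:\,U\subseteq V_i,\,U^*\subseteq V_i}2^{-2(K_i+1)}\sum_{s}\prod_{k\in U\triangle U^*}\tilde s_k .
\]
Since $U,U^*\subseteq V_i$ forces $U\triangle U^*\subseteq V_i$, the inner sum is $2^{K_i+1}$ if $U=U^*$ and $0$ otherwise, giving both the zero off-diagonal covariances and the stated $\mathrm{Var}[\beta_U]=\frac{\sigma^2}{N}\sum_{i=1}^N 2^{-(K_i+1)}I(U\in 2^{V_i})$.

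The main obstacle is the coefficient computation in the second paragraph: one must carefully separate the loci of $V_i$, which are pinned by the selected weight $w_{i,s}$, from the free loci, over which the Walsh product is averaged, and recognize that the free-locus sum is itself a signed-sum-of-$\pm1$ identity that annihilates every weight whose interaction set fails to contain $U$. Once this accounting is correct, the mean, variance, and zero-covariance statements are three routine applications of the fact that a product of $\pm1$ Walsh factors summed over a hypercube vanishes unless its index set is empty.
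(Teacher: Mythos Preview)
Your proposal is correct and follows essentially the same route as the paper: both start from the orthogonality $\tilde F^T\tilde F=2^N I$ to write $\mathbf\beta=2^{-N}\tilde F^T F\mathbf w$, then compute the entries of $\tilde F^T F$ (your $2^{-(K_i+1)}\prod_{k\in U}\tilde s_k\,I(U\subseteq V_i)$ is exactly the paper's $h(i,j,k)\,2^{N-(K_i+1)}$), and finally invoke Walsh orthogonality over the states of $V_i$ to read off the moments. Your presentation is somewhat more explicit in writing $\beta_U$ directly as a linear combination of the weights and in isolating the ``split $U$ into $U\cap V_i$ and $U\setminus V_i$'' step, whereas the paper packages the same computation in matrix form via $\tilde F^T F F^T\tilde F$; but the underlying argument is the same.
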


Some remarks are in order.  First, if the weights $\mathbf{w}$ are normally distributed, then the interaction  model coefficients, which are  linear functions of $\mathbf{w}$, are independent and normally distributed with the indicated means and variances.  For other distributions on $\mathbf{w}$, the exact distribution of the coefficients is an often intractable convolution problem.
However, as both $N$ and $K$ increase, the central limit theorem can be invoked to show that the coefficients will be approximately normally distributed (contrast with approximate normality of the fitnesses, which only requires $N$ large).
Regardless of the distribution of $\mathbf{w}$, the coefficients are uncorrelated.

 For $\mathbf{w}$ normally distributed, $E[|\beta_U|]=\sqrt{\mbox{Var}[\beta_U]2/\pi}$, i.e. the expected magnitude of $\beta_U$ is proportional to it's standard deviation. More generally, the variance represents the expected magnitude of the square of the coefficient ($\mbox{Var}[\beta_U]=E[\beta_U^2]$ for $U\ne \emptyset$).   With this perspective, it is interesting to note how the $\NK$ algorithm assigns importance (magnitude) to the interaction terms.  The expected squared magnitude of $\beta_U$ depends on the frequency with which $U\in  2^{V_i}$.  Then for the classic $\NK$ model where $K$ is constant and $K\ll N$, main effects have the largest expected magnitude,  second order interactions would typically have larger expected magnitude than third order interactions and so on. On the other hand, when $K=N-1$, all coefficients have the same expected magnitude because each power set $2^{V_i}$ contains main effects and interactions of all orders.

 When the coefficients are normally distributed, independence of model coefficients means that, for example, knowing the magnitude of a two loci interaction provides no information on the magnitudes of the corresponding main effects of the loci--the $\NK$ algorithm assigns magnitudes of the coefficients completely independently.

The interaction model representation of a landscape with binary loci is equivalent to the representation given by Walsh functions~\cite{reeves3}.  While $\NK$ landscapes have been studied from the perspective of Walsh functions~\cite{heckendorn}, the above results provide a transparent and explicit analysis of the Walsh coefficients for generalized $\NK$ landscapes, showing exactly which Walsh coefficients are nonzero and the statistical properties of these coefficients.  For example, note that Theorems 4 and 5 in~\cite{heckendorn}  follow immediately from the representation given in equation~\eqref{intmodel}.

An interesting property of generalized $\NK$ landscapes gleaned from Propositions~\ref{prop:nkinteq} and~\ref{prop:betaprop} is that two landscapes defined by different interaction sets can lead to landscapes with the same set of main effects and interactions, i.e. identical matrices $\tilde F$, but the variances of the coefficients of the landscapes can be different, suggesting that the properties of the resultant landscapes may also be different. Consider, for example, the two landscapes defined by the following interaction sets: $V_{1A}=\{1,2,3,4\},V_{2A}=\{2,3\},V_{3A}=\{1,3\},V_{4A}=\{1,3,4\},V_{5A}=\{2,5\}$ and
$V_{1B}=\{1,4\},V_{2B}=\{1,2,3,4\},V_{3B}=\{3\},V_{4B}=\{4\},V_{5B}=\{2,5\}$. Clearly $T_A=\bigcup_{i=1}^N 2^{V_{iA}}=
\bigcup_{i=1}^N 2^{V_{iA}}=T_B$ so that the landscapes have identical column space, but by Proposition~\ref{prop:betaprop},
the variances of the terms are not all identical. Consider, for example, the main effect for locus 1: From
Proposition~\ref{prop:betaprop}, Var$[\beta_{1A}]={7\sigma^2}/{16N}>\mbox{Var}[\beta_{1B}]={5\sigma^2}/{16N}$.

\subsection{Rank of $\NK$ landscapes and maximal rank models}
\label{subsec:maxrank}

 Here we show that the rank of the model matrix for an $\NK$ landscape is determined by the interaction sets $\{V_i\}_{i=1}^N$ that define the landscape.  We derive the maximum achievable rank and describe how interaction sets can be constructed to   maximize rank.

\begin{prop}\label{prop:rankprop}
$\mbox{rank}(\mathbf{\tilde F})={\cal M}\left(T\right)$ where ${\cal M}(\cdot)$ denotes counting measure and
where $T=\bigcup_{i=1}^N 2^{V_i}$.
\end{prop}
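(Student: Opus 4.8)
The plan is to recognize that $\tilde F$ is a $2^N\times L$ matrix whose number of columns is exactly $L=\mathcal{M}(T)$, so it suffices to prove that these columns are linearly independent; full column rank then forces $\mathrm{rank}(\tilde F)=L=\mathcal{M}(T)$. Each column is indexed by a \emph{distinct} subset $U\in T$ (distinctness is automatic, since $T$ is defined as a set) and records the values of the Walsh function $\chi_U(\tilde{\mathbf x}):=\prod_{j\in U}\tilde x_j$ as $\tilde{\mathbf x}$ ranges over the $2^N$ points of $\{-1,1\}^N$.

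The key step is the orthogonality of these Walsh functions under the counting inner product. For $U,U^\ast\in T$ I would compute
\[
\sum_{\tilde{\mathbf x}\in\{-1,1\}^N}\chi_U(\tilde{\mathbf x})\,\chi_{U^\ast}(\tilde{\mathbf x})
=\sum_{\tilde{\mathbf x}}\;\prod_{j\in U\cap U^\ast}\tilde x_j^2\;\prod_{j\in U\triangle U^\ast}\tilde x_j
=\sum_{\tilde{\mathbf x}}\;\prod_{j\in U\triangle U^\ast}\tilde x_j,
\]
where the middle equality splits the indices into those in both sets and those in the symmetric difference, and the last equality uses $\tilde x_j^2=1$. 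This remaining sum factors coordinate by coordinate, and whenever $U\triangle U^\ast\neq\emptyset$ it contains a factor $\sum_{\tilde x_k\in\{-1,1\}}\tilde x_k=0$; hence the inner product equals $2^N$ when $U=U^\ast$ and $0$ otherwise.

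Consequently the Gram matrix satisfies $\tilde F^{T}\tilde F=2^N I_L$, which is nonsingular, and a matrix with nonsingular Gram matrix has linearly independent columns. Therefore $\mathrm{rank}(\tilde F)=L=\mathcal{M}(T)$. I do not anticipate a serious obstacle: the argument reduces entirely to the orthogonality identity above, which is the standard fact that the Walsh functions (equivalently, the characters of $(\mathbb{Z}/2\mathbb{Z})^N$) form an orthogonal system. The only point requiring care is that the columns of $\tilde F$ are genuinely indexed by distinct subsets, which holds because $T$ is a union of power sets and thus lists each subset at most once.
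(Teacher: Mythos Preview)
Your argument is correct and is essentially the same as the paper's: the paper observes that the full $2^N\times 2^N$ matrix of all interaction terms is a Hadamard matrix with orthogonal columns, so any subcollection of distinct columns---in particular the $\mathcal{M}(T)$ columns of $\tilde F$---is linearly independent. You make the orthogonality explicit via the Walsh inner-product computation and the Gram matrix $\tilde F^T\tilde F=2^N I_L$, but the content is identical.
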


\begin{proof}
The result follows immediately from noting that the complete set of linear and interaction terms of all orders, $\{1,x_1,x_2,\dots,x_N,x_1*x_2,\dots,x_{N-1}*x_{N},x_1*x_2*x_3,\dots,x_1*x_2*\cdots *x_N\}$ comprise a Hadamard matrix,
 a linearly independent set of (orthogonal) column vectors.
\end{proof}

It follows from the proof of Proposition~\ref{prop:rankprop} that $\mbox{rank}(\mathbf{\tilde F})\equiv$ $[$number of model
interactions $-(N+1)]$, i.e. model rank is a one-to-one function of the total number of interactions in an $\NK$ landscape.

The next results give an upper bound on the rank of an $\NK$ landscape, and provide conditions under which maximal rank designs exist for classic $\NK$ landscapes.

\begin{prop}\label{prop4}
For the classic $\NK$ landscape ($K$ constant across loci), $\max\{\mbox{rank}(\mathbf{\tilde F})\}\le \min \{2^N,\allowbreak  N2^{K+1}+1-N(K+1)\}$
where the max is over all possible designs for fixed values of $N$ and $K$.
\end{prop}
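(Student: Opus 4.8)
The plan is to reduce everything to a counting problem via Proposition~\ref{prop:rankprop}, which tells us that $\mathrm{rank}(\tilde F)=\mathcal{M}(T)$ with $T=\bigcup_{i=1}^N 2^{V_i}$. I would then establish the two claimed upper bounds separately, the final bound being their minimum. The first bound is immediate: every $U\in T$ is a subset of $\{1,\dots,N\}$, so $T\subseteq 2^{\{1,\dots,N\}}$ and hence $\mathcal{M}(T)\le 2^N$ (equivalently, $\tilde F$ has only $2^N$ rows, so its rank cannot exceed $2^N$).

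For the second bound I would count with multiplicity. For each $U\in T$ let $m(U)$ denote the number of indices $i$ with $U\in 2^{V_i}$. Summing the sizes of the power sets and reorganizing by element gives
\[
\sum_{i=1}^N \bigl|2^{V_i}\bigr| \;=\; \sum_{U\in T} m(U),
\]
and since each $V_i$ has exactly $K+1$ elements in the classic landscape, the left-hand side equals $N2^{K+1}$. Because $\mathcal{M}(T)=\sum_{U\in T}1$, subtracting yields
\[
\mathcal{M}(T) \;=\; N2^{K+1} - \sum_{U\in T}\bigl(m(U)-1\bigr).
\]
Every term $m(U)-1$ is nonnegative, so any lower bound on the subtracted sum produces an upper bound on $\mathcal{M}(T)$.

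The crux of the argument is identifying the forced overlaps, which come precisely from the intercept and the main effects. The empty set lies in every $2^{V_i}$, so $m(\emptyset)=N$. For a singleton $\{j\}$ we have $\{j\}\in 2^{V_i}$ exactly when $j\in V_i$, and the defining property of the classic $\NK$ landscape is that each locus belongs to exactly $K+1$ interaction sets, so $m(\{j\})=K+1$ for every $j$. The empty set and the $N$ singletons are distinct elements of $T$, hence
\[
\sum_{U\in T}\bigl(m(U)-1\bigr) \;\ge\; (N-1) + N\cdot K \;=\; N(K+1)-1,
\]
which substitutes to give $\mathcal{M}(T)\le N2^{K+1}-N(K+1)+1$. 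Combining with $\mathcal{M}(T)\le 2^N$ yields the stated minimum.

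The only real obstacle is the bookkeeping in this last paragraph: one must verify that the intercept and main effects are the forced contributors to the multiplicity surplus, and invoke the ``exactly $K+1$ interaction sets'' property to pin $m(\{j\})$ down precisely rather than merely bound it below. Higher-order subsets shared among the $V_i$ may contribute further to $\sum_{U\in T}(m(U)-1)$, but since such extra overlap only enlarges the subtracted quantity, it can only decrease $\mathcal{M}(T)$ and therefore does not threaten the upper bound.
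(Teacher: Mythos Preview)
Your proof is correct and follows essentially the same approach as the paper: both arguments reduce to Proposition~\ref{prop:rankprop} and then count, recognizing that the intercept and the $N$ main effects are the forced overlaps among the power sets $2^{V_i}$, while interactions of order $\ge 2$ can at best be disjoint across the $V_i$. The paper's version is just terser---it separates out the $2^{K+1}-(K+2)$ higher-order subsets per locus and adds back one intercept and $N$ main effects---whereas you formalize the same count via the multiplicity function $m(U)$.
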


\begin{proof}
The maximum rank occurs when there are no overlaps in interactions between the inputs to the loci.  In this case each
locus will contribute $\sum_{j=2}^{K+1}{K+1 \choose j}=2^{K+1}-(K+2)$ interactions of order two or higher.
Summing over the $N$ loci and including the intercept and main effects terms gives the result.
\end{proof}

Proposition \ref{prop4} begs the question as to when maximal rank classic landscapes exist. Equation \eqref{intmodel} shows that all classic $\NK$ landscapes
contain a main effect corresponding to each loci.  Then for both $N$ and $K$ fixed, maximizing rank is equivalent to maximizing the number of interactions
in the model.  From the proof of Proposition \ref{prop4}, it is evident that a necessary and sufficient condition is that each set $V_i$ contain unique pairs of loci, as this ensures no redundant two factor interactions, and by extension no redundant higher order interactions.  We now give explicit cases when the maximum rank given in Proposition \ref{prop4} above can be achieved.
The proof of the following proposition is given in the appendix.

\begin{prop}\label{parameters} There exist classic $\NK$ landscapes of maximal rank for the following values of $N$ and $K$.
\begin{enumerate}
  \item $K=2$ and all $N\geq 7$.
  \item $K=3$ and all $N\geq 13$.
  \item $K=4$ and all $N\geq 21$ except possibly $N =22$.
  \item $K=5$ and all $N\geq 31$ except possibly $N =32,33,34$.
  \item $K=6$ and all $N\geq 51$.
  \item $K=7$ and $N  = 57, 64,67,69$ and all $N\geq 71$.
  \item  $K=8$ and $N  = 73,89$ and all $N\geq 91$.
  \item $K=9$ and $N = 91$ and all $N\geq 111$.

\end{enumerate}
\end{prop}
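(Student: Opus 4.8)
The plan is to convert the statement into a pure existence question about symmetric combinatorial configurations, and then to appeal to (and, in the threshold ranges, construct) the known existence spectrum for such configurations.

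First I would record exactly what maximal rank demands of the interaction sets. A classic $\NK$ landscape is specified by $N$ interaction sets $V_1,\dots,V_N$, each of size $K+1$, with $i\in V_i$ and with each locus lying in exactly $K+1$ of the sets. By the remark following Proposition~\ref{prop4}, the rank attains the bound of Proposition~\ref{prop4} if and only if no pair of loci occurs together in more than one $V_i$ (unique pairs, hence no redundant interactions of any order). Counting incidences forces the number of blocks to equal $N$, and counting the $(K+1)K$ distinct partners a fixed locus acquires through its $K+1$ blocks recovers the threshold $N\ge K^2+K+1$. Writing $k=K+1$, these requirements say precisely that $\{V_i\}$ is a \emph{symmetric configuration} $N_k$: $N$ points, $N$ blocks, each of size $k$, each point on $k$ blocks, any two points on at most one common block (equivalently, any two blocks meeting in at most one point).

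Second I would dispose of the bookkeeping requirement $i\in V_i$, which is absent from the abstract notion of a configuration. Given any symmetric configuration $N_k$, form its incidence (Levi) bipartite graph between the point class and the block class. This graph is $k$-regular, so by Hall's theorem it has a perfect matching; equivalently, the blocks admit a system of distinct representatives. Relabelling each block by its matched point produces an indexing with $i\in V_i$. Consequently a maximal-rank classic $\NK$ landscape with parameters $(N,K)$ exists if and only if a symmetric configuration $N_{K+1}$ exists.

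Third, with this equivalence the proposition reduces to asserting that symmetric configurations $N_k$ exist for the listed $N$ with $k=K+1$. The extremal cases $N=K^2+K+1=k^2-k+1$ are exactly projective planes of order $K$, which exist whenever $K$ is a prime power; this supplies the smallest admissible $N$ for $K\in\{2,3,4,5,7,8,9\}$ and explains why $K=6$, for which no order-$6$ plane exists, has no entry at $N=43$. For the remaining values I would invoke the established spectrum of symmetric configurations $n_k$ for $3\le k\le 10$, together with the deletion and amalgamation constructions (removing points and lines from planes, and combining smaller configurations) that extend existence up to and past each stated threshold, yielding the ``all $N\ge$'' tails and the sporadic entries. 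The main obstacle is this third step: proving, rather than merely citing, existence for every $N$ in the tails and at the sporadic values, while correctly leaving open the near-threshold exceptions ($N=22$ for $k=5$; $N\in\{32,33,34\}$ for $k=6$). Establishing a tail requires a construction that increments $N$ by one while keeping $k$ fixed and the Levi graph of girth at least $6$, and the delicate point is certifying that no repeated pair (short cycle) is created — precisely the obstruction responsible for the exceptional small $N$.
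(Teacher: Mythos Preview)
Your reduction is correct but proceeds along a different axis than the paper. The paper does not invoke symmetric configurations or Hall's theorem; instead it defines an $(N,\kappa)$ \emph{packing design} (pairs in at most one block) and then produces such designs \emph{cyclically}: it introduces $(N,\kappa)$ difference sets $D\subset\Z_N$ in which every nonzero difference occurs at most once, and shows that the $N$ translates $D+g$ form the required blocks. The condition $i\in V_i$ then drops out automatically because $0\in D$ in every listed set, so $g\in D+g$. The specific difference sets in Table~\ref{diffsets} (e.g.\ $\{0,1,3\}$ for $K=2$, $\{0,1,4,6\}$ for $K=3$) have differences supported on $\{\pm1,\dots,\pm m\}$ for suitable $m$, so a single set certifies all $N$ above a fixed threshold at once; this is what generates the ``all $N\ge$'' tails in the statement without any incremental or amalgamation argument.

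Your route trades this explicit cyclic construction for a cleaner structural equivalence (maximal-rank classic $\NK$ $\Leftrightarrow$ symmetric configuration $N_{K+1}$) and a Hall/SDR argument that handles the indexing requirement in full generality, not just in the cyclic case. The cost is that you must then cite or reconstruct the configuration spectrum, whereas the paper's difference sets are self-certifying by inspection of the difference list. Note, however, that the paper's own proof shares the weakness you flag as your ``main obstacle'': the difference sets in Table~\ref{diffsets} cover only $N\ge23,35,71,91,111$ for $K=4,5,7,8,9$, so the boundary values $N=21$ (for $K=4$), $N=31$ (for $K=5$), and the sporadic entries $N=57,64,67,69,73,89,91$ are not produced by the cyclic construction either and implicitly rest on outside existence results---precisely the projective-plane and near-threshold constructions you allude to.
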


 It is not difficult to extend Proposition \ref{parameters} to larger values of $K$, and the approach for doing so is contained in the proof of the result. It is worth noting, and not difficult to prove, that a necessary condition for existence of maximal rank landscapes is that $N \geq K^2 +K +1$, because otherwise at least one pair of loci must occur together in more than one set $V_i$.

 Proposition \ref{parameters} established the existence of maximum rank $\NK$ landscapes. We now discuss the construction of these landscapes, which is easily achieved through the use of what we term $\NK$ difference sets.  The combinatorial theory underlying these sets is detailed in the proof of Proposition \ref{parameters} given in the appendix.  Table \ref{diffsets} provides the needed difference sets for up to $K=9$.

 To construct a maximal rank classic $\NK$ landscape, recall that an $\NK$ landscape is completely determined by it's interaction sets $\{V_i\}_{i=1}^N$.  Interaction sets resulting in maximal rank designs are defined by incrementing the elements in the difference sets by one (modulo $N$) until $N$ sets have been defined.   This process is illustrated in the following example.

 \textbf{Example}: Consider constructing an $\NK$ landscape achieving maximum rank when $N=7$ and $K=2$.  From Table  \ref{diffsets}, the $\NK$ difference set is $\{0,1,3\}$.  Then define
 $V_1=\{1,2,4\}, V_2=\{2,3,5\}, V_3=\{3,4,6\},V_4=\{4,5,7\},V_5=\{5,6,1\},V_6=\{6,7,2\},V_7=\{7,1,3\}$.  Note that increments are modulo $N$, i.e. increments exceeding $N$ ``wrap around". 

 \begin{table}[htd]
\caption{\label{diffsets}
  $\NK$ difference sets used for constructing maximum rank designs. The second column gives values of $N$
  such that maximal $\NK$ landscapes exist.}
\begin{center}
\begin{tabular}{ccc}
$K$ & $N$ & $\NK$ Difference Set \\
2 & $N\ge 7$ & $\{0,1,3\}$ \\
3 & $N\ge 13$ & $\{0,1,4,6\}$ \\
4 & $N\geq 23$ & $\{0,2,7,8,11\}$ \\
5 & $N\geq 35$ & $\{0,1,4,10,12,17\}$ \\
6 & $N\geq 51$ & $\{0,1,4,10,18,23,25\}$ \\
7 & $N\geq 71$ & $\{0,4,5,17,19,25,28,35\}$ \\
8 & $N\geq 91$ & $\{0,2,10,24,25,29,36,42,45\}$ \\
9 & $N\geq 111$ & $\{0,1,6,10,23,26,34,41,53,55\}$ \\
\end{tabular}
\end{center}
\end{table}

 The design constructed in the example is shown in Figure~\ref{fig:maxdesign}.  The $i$th column of the figure gives the elements in $V_i$.  Notice that no pair of loci appears together more than once across columns.  The landscape resulting from this design has rank $N2^{K+1}+1-N(K+1)=36$, resulting from an intercept, 7 main effects, 21 two factor interactions and 7 three factor interactions.

Contrast with Figure~\ref{fig:neighbor} which shows the design resulting from choosing adjacent loci for the epistatic interactions.  This design has rank 29, resulting from an intercept, 7 main effects, 14 two factor interactions and 7 three factor interactions.

\begin{figure}[h]
	\centering
	\subfloat[Subfigure 1 list of figures text][Maximal Rank Design]{
	\includegraphics[width=0.4\textwidth]{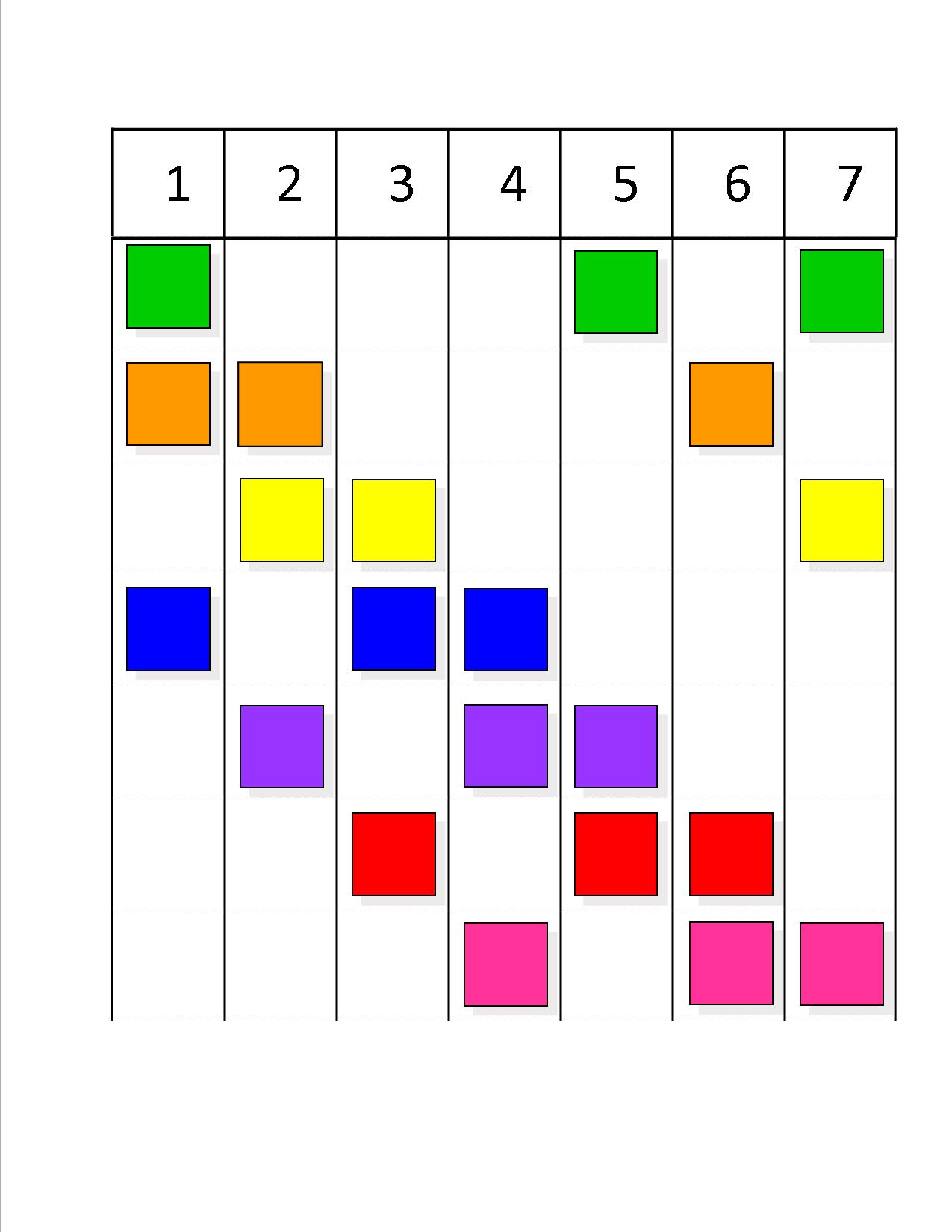}
	\label{fig:maxdesign}}
	\qquad
	\subfloat[Subfigure 2 list of figures text][Adjacent Loci Design]{
	\includegraphics[width=0.4\textwidth]{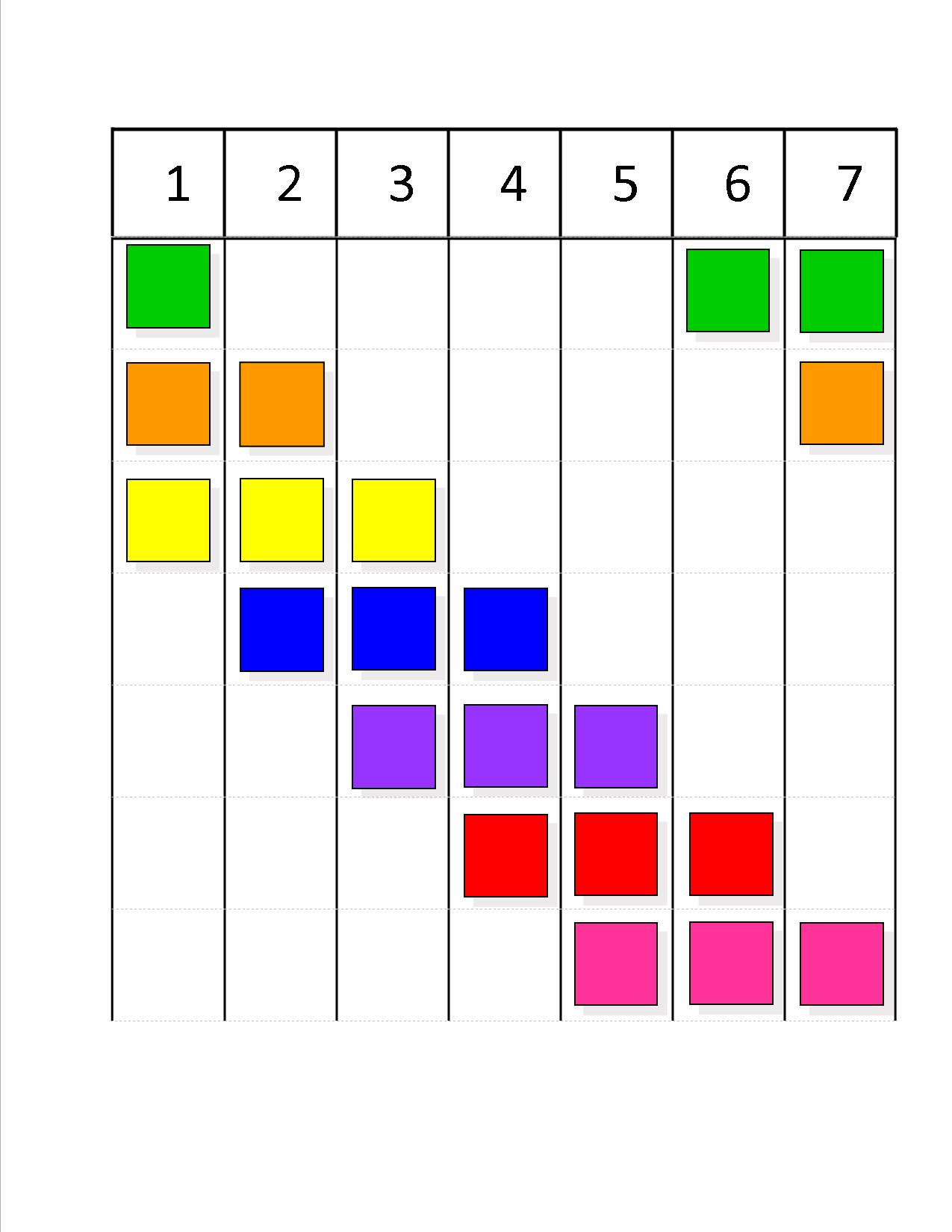}
	\label{fig:neighbor}}
	\caption{Interaction sets for maximal and adjacent loci designs.}
	\label{fig:globfig}
	\end{figure}

\bigskip

\section{Number of Local optima}
\label{sec:localoptima}

The number of local optima is perhaps the strongest measure of landscape ruggedness and search difficulty~\cite{kallel}, \cite{reidys}, \cite{verel}.
For fixed $N$, it has been established empirically that the number of local optima increase with $K$~\cite{eremeev}.  The result is not surprising as increasing $K$ increases
the landscape model rank by increasing both
the number and order of interactions defining the landscape.  An unexplored question is the association between the number of local optima and landscape
rank when $N$ and $K$ are both fixed.

We begin by providing an analytic expression for the expected number of local optima on $\NK$ and generalized $\NK$ landscapes.

\begin{prop}\label{localpeaks}
Consider a generalized $\NK$ landscape defined by interaction sets $V_1,V_2,\dots,V_N$ and weight vector $\mathbf{w}$ where the elements of
 $\mathbf{w}$ are independent normal random variables with $E[w_i]=\mu/N$ and Var$[w_i]=\sigma^2/N$.
Then the expected number of local optima on the landscape is given by $2^N\Phi(\mathbf{0};\Sigma)$ where
\[
\Phi(\mathbf{0};\Sigma)=\int_0^\infty\int_0^\infty\cdots\int_0^\infty e^{-\frac{1}{2}\mathbf{z}^T\Sigma^{-1}\mathbf{z}}dz_1 dz_2\dots dz_N
\]
and where $\Sigma$ is an $N\times N$ symmetric matrix with elements $\sigma_{ij}$ given by
\[ \sigma_{ij} = \left\{ \begin{array}{ll}
         \frac{2\sigma^2}{N}\sum_{k=1}^N I(i\in V_k) & \mbox{if $i=j$};\\
        \frac{\sigma^2}{N}\left (\sum_{k=1}^N I(i\in V_k)+\sum_{k=1}^N I(j\in V_k) \right. \\
       \left. -\sum_{k=1}^N I(i\in V_k\quad\mbox{or}\quad j\in V_k)\right) & \mbox{if $i\ne j$}.\end{array}
        \right. \]
\end{prop}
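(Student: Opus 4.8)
The plan is to write the expected number of local optima as a sum over all $2^N$ vertices of the hypercube of the probability that each vertex is a local optimum, and to show that this probability is the \emph{same} orthant probability $\Phi(\mathbf{0};\Sigma)$ for every vertex. First I would fix a location $\mathbf{x}$ together with its $N$ Hamming-neighbors $\mathbf{x}^{(1)},\dots,\mathbf{x}^{(N)}$, where $\mathbf{x}^{(\ell)}$ flips locus $\ell$. Define the difference vector $\mathbf{D}=(D_1,\dots,D_N)$ with $D_\ell=p(\mathbf{x})-p(\mathbf{x}^{(\ell)})$, where $p(\mathbf{x})=\sum_{i=1}^N \mathbf{w_i}^T\mathbf{f_i}(\mathbf{x})$. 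Since the weights are continuous, ties occur with probability zero, so $\mathbf{x}$ is a local optimum exactly when $\mathbf{D}$ lies in the positive orthant, and by linearity of expectation $E[\#\text{optima}]=\sum_{\mathbf{x}}\Pr(\mathbf{D}>\mathbf{0})$.

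Next I would identify the law of $\mathbf{D}$. Flipping locus $\ell$ changes only the terms $\mathbf{w_i}^T\mathbf{f_i}$ with $\ell\in V_i$, so $D_\ell=\sum_{i:\ell\in V_i}(w_{i,c_i}-w_{i,c_i^{(\ell)}})$, where $c_i$ is the index of the weight in $\mathbf{w_i}$ selected by $\mathbf{x}$ and $c_i^{(\ell)}$ the index selected after flipping $\ell$. Because $\ell\in V_i$, flipping $\ell$ changes the binary label of $V_i$, so $c_i\ne c_i^{(\ell)}$ and each summand is a difference of two distinct, independent weights, each $N(\mu/N,\sigma^2/N)$. As a linear combination of jointly normal weights, $\mathbf{D}$ is multivariate normal, and its mean is $\mathbf{0}$ since each summand has mean $\mu/N-\mu/N=0$ (any common intercept cancels in a fitness difference).

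The core computation is the covariance. On the diagonal, independence across the $\mathbf{w_i}$ gives $\mbox{Var}(D_\ell)=\sum_{i:\ell\in V_i}\mbox{Var}(w_{i,c_i}-w_{i,c_i^{(\ell)}})=\sum_{i:\ell\in V_i}2\sigma^2/N$, which is $\sigma_{\ell\ell}$. For $\ell\ne m$, only terms $i$ with both $\ell\in V_i$ and $m\in V_i$ contribute, since distinct $\mathbf{w_i}$ are independent. In such a term the three indices $c_i,\,c_i^{(\ell)},\,c_i^{(m)}$ are pairwise distinct: $c_i^{(\ell)}$ and $c_i^{(m)}$ each differ from $c_i$ in one coordinate, and they differ from one another in both coordinates $\ell$ and $m$. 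Hence in $\mbox{Cov}(w_{i,c_i}-w_{i,c_i^{(\ell)}},\,w_{i,c_i}-w_{i,c_i^{(m)}})$ every cross term vanishes and only $\mbox{Var}(w_{i,c_i})=\sigma^2/N$ survives, giving $\mbox{Cov}(D_\ell,D_m)=\frac{\sigma^2}{N}\sum_k I(\ell\in V_k\text{ and }m\in V_k)$. A one-line inclusion--exclusion, $I(\ell\in V_k)+I(m\in V_k)-I(\ell\in V_k\text{ or }m\in V_k)=I(\ell\in V_k\text{ and }m\in V_k)$, rewrites this as the stated $\sigma_{ij}$.

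The \emph{decisive} observation is that neither the mean nor the covariance depends on the particular vertex $\mathbf{x}$: the distinctness of $c_i,c_i^{(\ell)},c_i^{(m)}$ holds at every location, so $\mathbf{D}\sim N(\mathbf{0},\Sigma)$ with one fixed $\Sigma$ throughout. Therefore $\Pr(\mathbf{D}>\mathbf{0})$ equals the orthant probability $\Phi(\mathbf{0};\Sigma)$ for all $2^N$ vertices, and summing yields $2^N\Phi(\mathbf{0};\Sigma)$. I expect the main obstacle to be precisely this location-independence bookkeeping---verifying that within each shared interaction set the three selected weight indices are always distinct, which is exactly what forces the off-diagonal covariance to collapse to a single surviving variance term; the remaining steps (normality, the zero mean, and the inclusion--exclusion rewriting) are routine.
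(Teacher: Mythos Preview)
Your proposal is correct and follows essentially the same route as the paper's own proof: define the vector of fitness differences to the $N$ neighbors, observe it is multivariate normal with mean zero, identify the covariance matrix $\Sigma$, and multiply the resulting orthant probability by $2^N$. The paper's proof is in fact terser than yours---it states that ``the variance matrix is straightforward to derive'' and omits the computation---so your explicit verification that $c_i,\,c_i^{(\ell)},\,c_i^{(m)}$ are pairwise distinct (forcing the off-diagonal covariance to reduce to a single $\sigma^2/N$), together with your explicit check of location-independence, supplies exactly the details the paper leaves to the reader.
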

\begin{proof}
Let $p^*$ denote the fitness of a randomly selected location on the landscape, and let $p^*_i$ denote the fitness one hamming distance away  obtained by flipping the value of locus $i$.  Let $\mathbf{z}=(p^*-p^*_1,p^*-p^*_2,\dots,p^*-p^*_N)$ denote the $N\times 1$ vector of fitness differences.  Then the probability that the random location with fitness $p^*$ is a local optima is given by the orthant probability Pr$(\mathbf{z}>0)$, and the expected number of local optima on the landscape is then $2^N\mbox{Pr}(\mathbf{z}>0)$.  When $\mathbf{w}$ is
jointly normally distributed, it follows that $\mathbf{z}$ is multivariate normal because it is a linear function of $\mathbf{w}$.  The mean of $\mathbf{z}$ is clearly zero, and the variance matrix  is straightforward to derive and is given by $\Sigma$  defined above. The derivation is omitted.
The orthant probability Pr$(\mathbf{z}>0)$ is therefore given by $\Phi(\mathbf{0};\Sigma)$.
\end{proof}

Proposition~\ref{localpeaks} assumes $\mathbf{w}$ is multivariate normal.  The utility of this assumption is that the expected number of local optima
then depends on the multivariate normal orthant probability $\Phi(\mathbf{0};\Sigma)$, allowing us to take advantage of the extensive research
on the numerical computation of multivariate normal probabilities (e.g.~\cite{genz} and references therein). We are then able
to estimate the number of local optima for very large landscapes without having to generate an actual landscape and check individually whether each location is a local peak.
Note the normality assumption implies that the expected number of local optima is a function of only $N$ and $\Sigma$.

For arbitrary probability distributions for the weights $\mathbf{w}$, computation of the orthant probability Pr$(\mathbf{z}>0)$ is typically an intractable $N$ dimensional integral.     When the weights are non-normal and $N$,$K$ are both large, the distribution of $\mathbf{z}$ (see proof of Proposition~\ref{localpeaks})  is approximately normal by the central limit theorem, and normal orthant probabilities should then provide reasonable approximations to the expected number of local optima.

An empirical study was done to explore the relation between landscape rank and the number of local optima for both classic and generalized $\NK$ landscapes.  For each combination of $N=25,50,100$ and $K=1,2,\dots,7$, we first generated interaction sets for 20 classic $\NK$ landscapes. For each value of $N$ and $K$,  maximal rank landscapes were generated when they were known to exist, and an adjacent
 loci interaction design was also generated.  Additional classic $\NK$ landscapes were generated by randomly selecting an $N\times N$ latin square using the methods in~\cite{jacobson}  and then randomly selecting $K+1$ rows from the latin square. The resulting $N$ columns of $K+1$ elements comprise the $N$ interaction sets that define a classic $\NK$ landscape.   For each landscape, the rank and the expected number of local optima were computed.  Multivariate normal orthant probabilities representing the expected number of local optima were computed using the \verb+"mvtnorm"+ package in the R computing environment, see~\cite{genz}.

As seen in Figure~\ref{fig:loglogplotnew}, there is a strong positive correlation between the expected number of local optima and landscape rank for fixed $N$ and increasing $K$.
 Figure~\ref{fig:resolutionplotnew} provides additional resolution with separate plots for $K=3,4,5$ when $N=50$, showing very strong {\it negative} correlation between rank and the number of local optima when $N$ and $K$ are both fixed. Results for $N=25$ and 100 were similar. The maximal or near maximal rank designs are given by the points in the lower right corners of Figure~\ref{fig:resolutionplotnew},
whereas the adjacent designs are given by the point in the upper left corners.  Note that landscapes computed using adjacent loci to define interaction sets resulted in landscapes with significantly lower
rank and larger expected number of local optima than randomly chosen landscapes.

The strong negative correlation seen in Figure~\ref{fig:resolutionplotnew} is perhaps surprising.  Recall that landscape rank is equivalent to the number of terms in the interaction model representation, so that larger rank corresponds to more interaction terms. It is well known that landscapes with main effects but no interactions have only a single peak.
It would seem that additional interaction terms would translate on average to more rugged landscapes, but this is
not the case when $N$ and $K$ are both fixed.

This phenomenon is explained, at least partially, by noting that when
Proposition~\ref{prop:betaprop} is applied to any maximal design for an $\NK$ landscape, it follows that main effects have variance $(K+1)2^{-(K+1)}\sigma^2/N$ and interactions of all orders have variance $2^{-(K+1)}\sigma^2/N$ giving a ratio of $(K+1)/1$, demonstrating that main effects can have considerably more influence than interactions in maximal designs. This observation would clearly extend to designs that are nearly maximal. Conversely, for an adjacent loci design, the variation is spread more equitably among main effects and interactions. For example, applying Proposition~\ref{prop:betaprop},
the ratio of variances for a main effect and two factor interaction is $(K+1)/K$ for the adjacent loci design.

\begin{center}
\begin{figure*}[!t]
\small
\includegraphics[width=7.0in]{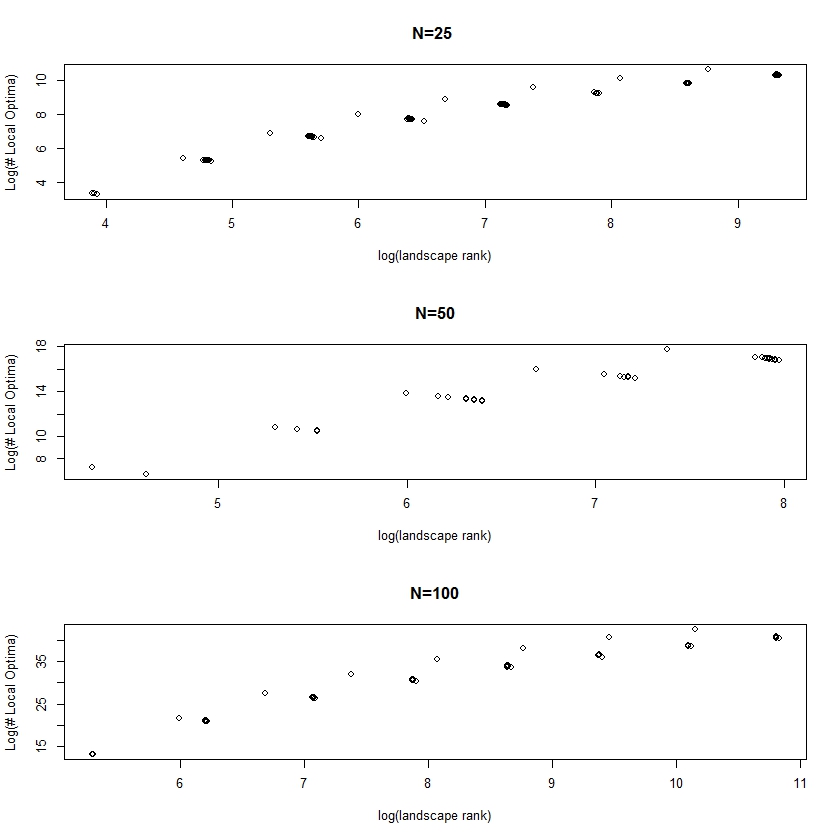} 
\caption{Classic $\NK$ landscapes: Expected number of local optima versus landscape rank (on log-log scale) for $K$=1 to 7 }
\label{fig:loglogplotnew}
\end{figure*}
\end{center}

\begin{center}
\begin{figure*}
\includegraphics[width=7.0in]{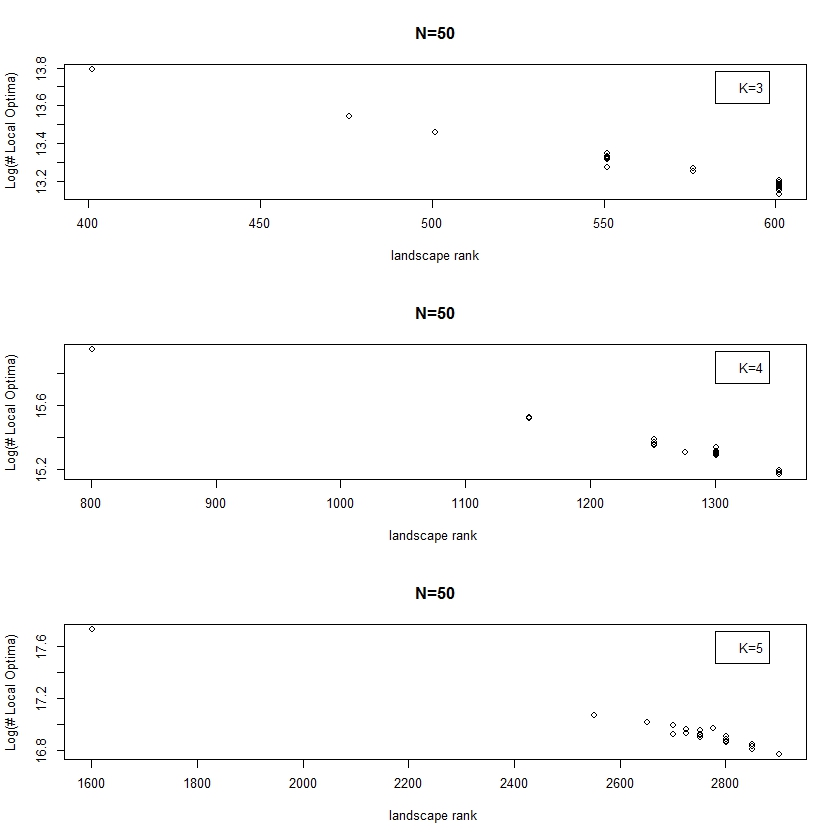} 
\caption{Classic $\NK$ landscapes: Expected number of local optima (on log scale) versus landscape rank  for $N=50$, $K$=3,4,5 }
\label{fig:resolutionplotnew}
\end{figure*}
\end{center}

Note also from Proposition~\ref{prop:betaprop} that for fixed $N$ and $K$, the variance of the main effects and the {\it sum} of the variances of interaction coefficients is constant for all classic $\NK$ landscapes, i.e. these quantities
do not vary regardless of how the interaction sets are defined.
It follows that for fixed $N$ and $K$, designs that increase the number of interaction terms must have a decreased average variance for the interaction terms.
 In other words, additional model complexity achieved through the addition of interaction terms is necessarily offset by a reduction in their expected magnitude.

Additional simulations were done to assess the relation between rank and number of local optima for { \it generalized} $\NK$ landscapes.  To construct
generalized $\NK$ landscapes, we kept the sizes of the interaction sets $V_i$ constant for a given landscape,
but did not restrict the number of times that locus $i$ could appear in the interaction sets $V_j$ for $i \ne j$ (for classic $\NK$ landscapes, locus $i$ appears
in exactly $K$  interaction sets $V_j$ for $i \ne j$).

We randomly generated 20 generalized $\NK$ landscapes for each combination of $N=25,50,100$ and $K=1,2,\dots,7$.
A strong positive correlation is again seen between rank and expected number of local optima as $K$ varies with $N$ fixed, see Figure~\ref{fig:loglogplot}.
However, the additional resolution provided by Figure~\ref{fig:resolutionplot} shows a {\it positive} correlation between
landscape rank and expected number of local optima.  An explanation is that for generalized $\NK$ landscapes the expected magnitude of main
and interaction effects can vary, and it is possible to have expected magnitudes for a subset of interactions larger than those for
a subset of main effects.

\begin{center}
\begin{figure*}
\includegraphics[width=7.0in]{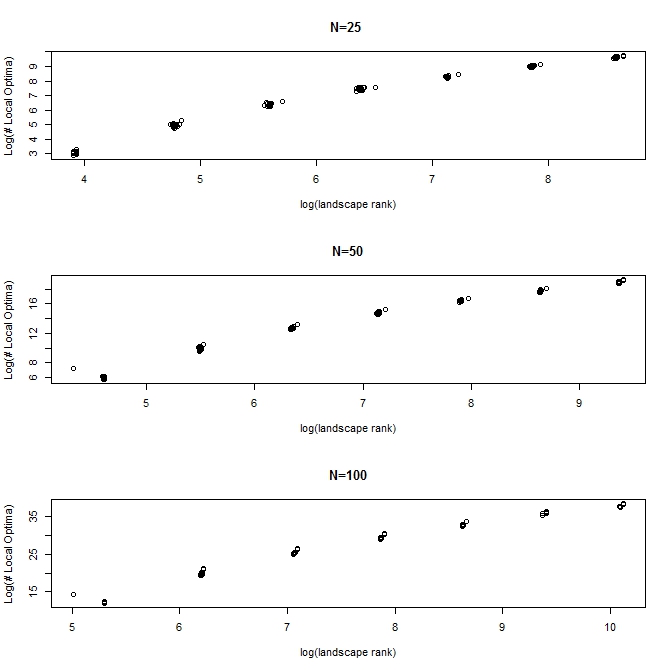} 
\caption{Generalized $\NK$ landscapes: Expected number of local optima versus landscape rank (on log-log scale) for $K$=1 to 7 }
\label{fig:loglogplot}
\end{figure*}
\end{center}

\begin{center}
\begin{figure*}
\includegraphics[width=7.0in]{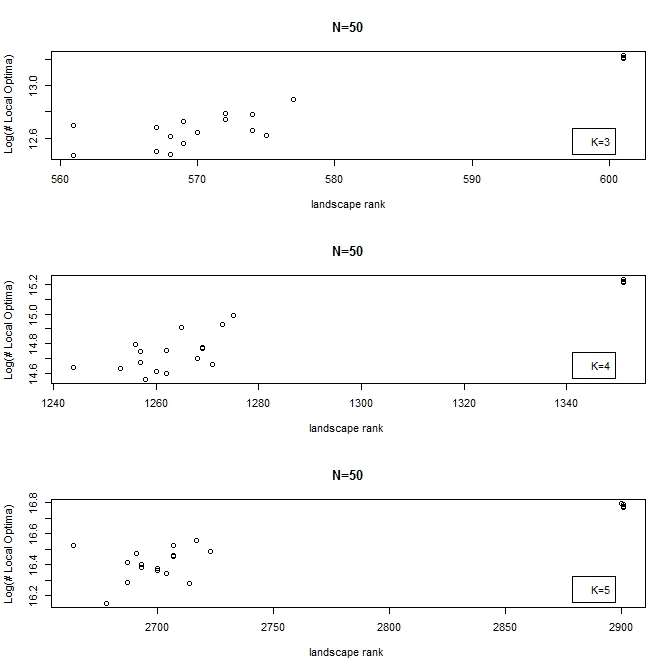} 
\caption{Generalized $\NK$ landscapes: Expected number of local optima (on log scale) versus landscape rank  for $N=50$, $K$=3,4,5 }
\label{fig:resolutionplot}
\end{figure*}
\end{center}

\section{Conclusion}

Representation of $\NK$ landscapes as linear models in matrix form provides a transparent connection to parametric linear interaction models, and provides
 a straightforward means for deriving the statistical properties of interaction model
coefficients induced by the $\NK$ landscape algorithm.  The interaction model representation coupled with distributional properties of model coefficients provides  new insights into properties of $\NK$ landscapes.  Expressing  the expected number of local optima as a multivariate normal orthant probability provides additional insight into aspects of $\NK$ landscapes that affect the number of local optima, and also allows for quick computation of the expected number of local optima.

The gaps in the data points seen in the  horizontal axes of Figure~\ref{fig:loglogplot} and Figure~\ref{fig:resolutionplot} represent gaps in the ranks
of $\NK$ landscape models and clearly illustrate that classic $\NK$ landscapes only represent a small subset of possible interaction designs, see also~\cite{heckendorn} for a similar observation. While our definition of generalized $\NK$ landscapes would fill in some of the gaps, the definition still requires that lower order interactions that are contained in higher order interactions appear in the model.

The representation of $\NK$ landscapes as interaction models immediately suggests a more general definition that allows for
main effects and interactions of any order to appear together without restriction.  Such landscapes could be constructed with any rank ranging from 1 to $2^N$, and main effect and interaction coefficients could be generated with arbitrary variances and correlation structures.  In this vein, \cite{reeves1} defined a class of tunable landscapes allowing general interaction structures and studied the performance of GA on these landscapes.

Using arguments similar to those in Section~\ref{sec:localoptima}, it is not difficult to show that for a general interaction model with normally distributed coefficients, the expected number of local optima is represented by a normal orthant probability that depends entirely on the variance/covariance matrix of the vector of fitness differences. However, it is not obvious how to select interactions and assign variances to coefficients that will result in landscapes with large numbers of local peaks, or more generally that are difficult to search.    There are few results regarding the size of normal orthant probabilities as a function of properties of the variance/covariance matrix, though~\cite{rinott} provide potentially useful results in this context.

Characterizing the search difficulty of landscapes is itself difficult~\cite{naudts}, and the number of local optima is an imperfect determinant of search difficulty~\cite{kallel},~\cite{naudts}.  For example, the needle-in-a-haystack landscape, which can be represented as a parametric interaction model containing all possible interactions where the magnitudes of all
main and interactions effects are equal, has a single peak which can only be found by guessing.   While the maximal rank designs for fixed $N$ and $K$ had smaller expected number of local optima, they may
posses other attributes that affect search difficulty.

An attractive feature of the $\NK$ algorithm is that a very rich set of landscapes can be generated with only two tuning parameters. Whether a procedure with comparable simplicity can be developed for constructing a more general class of interaction models that are difficult to search is an area for future research.

\appendices
\appendix[Proof of Propositions~\ref{prop:nkinteq},~\ref{prop:betaprop} and~\ref{parameters}]

The following is a proof of Proposition~\ref{prop:nkinteq}.

\begin{proof}

We first construct a matrix representation for the interaction model. For the $i$th locus, let $S_{i,1}(\mathbf{\tilde x})$ denote the vector of covariates corresponding to the inputs for locus $i$.  For example, if  $V_1=\{1,4,6,9\}$, then $S_{1,1}=(x_1,x_4,x_6,x_9)$. Note that there are $K_i+1$ elements in  $S_{i,1}$.  More generally, let $S_{i,m}(\mathbf{\tilde x})$ denote the set of ${ K_i+1 \choose m}$ $m$-order interactions for the inputs of locus $i$.  For $i=1,\dots N$, define the $1\times 2^{K_i+1}$ vector $\mathbf{\tilde f_i}(\mathbf{x})=(1,S_{i,1},S_{i,2},\dots,S_{i,(K+1)})$, and note that $\mathbf{\tilde f_i}(\mathbf{\cdot})$  can take $2^{K_i+1}$ different values.


 Consider the $1\times C$ vector $\mathbf{\tilde f(\mathbf{x})}=\{\mathbf{\tilde f_1}(\mathbf{x}),\mathbf{\tilde f_2}(\mathbf{x}),\dots,\mathbf{\tilde f_N}(\mathbf{x})\}$ where $C= \sum_{i=1}^N2^{K_i+1}$. Define the $2^N\times C $ {\it model matrix}

\begin{equation}
\tilde F^*=\left( \begin{array}{ccc}
\mathbf{\tilde f(\mathbf{x_1})} \\
\mathbf{\tilde f(\mathbf{x_2})} \\
\vdots \\
\mathbf{\tilde f(\mathbf{x_{2^N}})}
\end{array} \right).
\end{equation}

Note that $\tilde F^*\ne \tilde F$ but clearly $\mathcal{C}(\tilde F^*)=\mathcal{C}(\tilde F)$ as  $\tilde F^*$ is comprised of all the columns in $\tilde F$ with some columns repeated.  $\tilde F^*$ is an overparameterized version of $\tilde F$, and in the following we show that $\mathcal{C}(\tilde F^*)=\mathcal{C}(F)$.

The proof uses two basic results. First, suppose two matrices have the same column space. Choose $l$ rows from the first matrix and form a new matrix by expanding this matrix by row concatenating each of the $l$ rows $j$ times.  The same operations are applied to the second matrix, that is the corresponding $l$ rows from the second matrix are repeated $j$ times. The expanded matrices are then easily seen to have the same column space.  The second result  is that
 if $\mathcal{C}(A_l)=\mathcal{C}(B_l)$ for $l=1,\dots N$ then $\mathcal{C}(A)=\mathcal{C}(B)$ where
$A=A_1\mid A_2 \mid \dots \mid A_N$ and  $B=B_1\mid B_2 \mid \dots \mid B_N$ and where $\mid$ denotes column concatenation.

Let $\mathbf{y_i}=(x_i,x_{i_1},\dots,x_{i_{K_i}})$ and let $\mathbf{y_{i,1}},\mathbf{y_{i,2}},\dots,\mathbf{y_{i,2^{K_i+1}}}$ be the binary ordering of $\mathbf{y_i}$.  Define

\begin{align*}
\tilde a_i &= \begin{pmatrix}
\mathbf{\tilde f_i(\mathbf{y_{i,1}})} \\
\mathbf{\tilde f_i(\mathbf{y_{i,2}})} \\
\vdots \\
\mathbf{\tilde f_i(\mathbf{y_{i,2^{K+1}}})}
\end{pmatrix},  &
a_i &= \begin{pmatrix}
\mathbf{ f_i(\mathbf{y_{i,1}})} \\
\mathbf{ f_i(\mathbf{y_{i,2}})} \\
\vdots \\
\mathbf{ f_i(\mathbf{y_{i,2^{K+1}}})}
\end{pmatrix}.
\end{align*}
Then $\mathcal{C}(\mathbf{a_i})= \mathcal{C}(\mathbf{\tilde a_i})$ as $\tilde a_i$ is a $2^{K_i+1}\times 2^{K_i+1}$ full rank matrix, and $a_i$ is the $2^{K_i+1}\times 2^{K_i+1}$ identity matrix.  Next, let $F_i$ and $\tilde F_i^*$ denote columns
$\sum_{l=1}^{i-1}2^{K_l+1}+1$ to $\sum_{l=1}^{i}2^{K_l+1}$ of $F$ and $\tilde F^*$ respectively.  Note $F_i$ and $\tilde F_i^*$ are obtained from $a_i$ and $\tilde a_i$ by repeating rows of these matrices.  Then by the first result described at the start of the proof, $\mathcal{C}(\mathbf{F_i})= \mathcal{C}(\mathbf{\tilde F_i^*})$.

Finally, the result follows from the fact that $\tilde F^*=\tilde F_1^*\mid \tilde F_2^* \mid \dots \mid \tilde F_N^*$
and $F=F_1\mid F_2 \mid \dots \mid F_N$.

\end{proof}

The following is the proof of Proposition~\ref{prop:betaprop}.

\begin{proof}

Note that $2^N\beta_\emptyset=1^T\tilde F\mathbf{\beta}=1^TF\mathbf{w}$ and the expectation of the RHS is easily seen to be $2^N\mu$ from which it follows that $E[\beta_\emptyset]=\mu$.   Let $\mathbf{\tilde F_{[k]}}$ denote the $k$th column of $\tilde F$ and $\beta_{U_{[k]}}$ the corresponding interaction model coefficient.
Then for $k>1$, $E[\beta_{U_{[k]}}]=E[\mathbf{\tilde F_{[k]}}^T\tilde F\mathbf{\beta}]=E[\mathbf{\tilde F_{[k]}}^T F\mathbf{w}]=0$ where the last equality follows from $\mathbf{1}^T\mathbf{\tilde F_{[k]}}=0$ whenever $k>1$.

To obtain the variance/covariance results, note that $\tilde F\mathbf{\beta}=F\mathbf{w}$ implies
\[
\mathbf{\beta}=2^{-N}\tilde F^TF\mathbf{w}
\]
where we use the fact that the columns of $\tilde F$ are orthogonal, each with norm $2^N$.  Then
\[
\mbox{Var}[\mathbf{\beta}]=2^{-2N}\tilde F^TFF^T\tilde F\times\mbox{Var}[\mathbf{w}]=\frac{\sigma^22^{-2N}}{N}\tilde F^TFF^T\tilde F.
\]

To evaluate $\tilde F^TF$, let $\mathbf{f_{i,j}}$ represent the $j$th column of $F_i$ where $F_i$ represents the submatrix of $F$ corresponding to the interaction set $V_i$ (columns $\sum_{l=1}^{i-1}2^{K_l+1}+1$ to $\sum_{l=1}^{i}2^{K_l+1}$ of $F$). Then it is not difficult to show that
\begin{equation}\label{matrixprod}
\mathbf{\tilde F_{[k]}^T f_{i,j}} =
    h(i,j,k) 2^{N-(K_i+1)} 
\end{equation}
where the function $h(i,j,k)\in \{-1,1\}$ when $U_{[k]}\in 2^{V_i}$ and zero otherwise ($h(i,j,k)$ is defined explicitly below), and
where $U_{[k]}$ denotes the interaction term corresponding to $\mathbf{\tilde F_{[k]}}$.  Let $\mathbf{g_k}^T$ denote the $k$th row of $\tilde F ^T F$. Then it follows from the identity above that $\mathbf{g_k}^T\mathbf{g_k}=\sum_{i=1}^N 2^{K_i+1}\left(2^{N-(K_i+1)}\right)^2I(U_{[k]}\in 2^{V_k})=\sum_{i=1}^N 2^{2N-(K_i+1)}I(U_{[k]}\in 2^{V_k})$, and then that
  \begin{align*}
\mbox{Var} [\beta_{U_{[k]}}] & =  \frac{\sigma^2}{N}\left (\frac{1}{2^N}\right )^2\sum_{i=1}^N2^{2N-(K_i+1)}I(U_{[k]}\in 2^{V_{i}}) \\
& =\frac{\sigma^2}{N}\sum_{i=1}^N2^{-(K_i+1)}I(U_{[k]}\in 2^{V_{i}}).
\end{align*}

Finally, the covariance result will follow provided the rows of $\tilde F^T F$ are orthogonal.  We first define the function $h(i,j,k)$  appearing in \eqref{matrixprod}. For $j=1,\dots,2^{K_i+1}$, let $e_i^{-1}(j-1)$ denote the $1\times 2^{K_{i+1}}$ vector representation of $j-1$ as a binary number.  The elements of $e_i^{-1}(j-1)$, from left to right, correspond to values for $\{x_{i_1},x_{i_2},\dots,x_{i_{K_i+1}}\}$, compare to the definition of $e_i(\cdot)$ in Section~\ref{subsec:matrixnk}.  Define $h(i,j,k)=I(U_{[k]}\in 2^{V_i})\prod_{U_{[k]}} 2(e_i^{-1}(j-1)-1)$ where the right hand side represents the product of the values of the elements of $e_i^{-1}(j-1)$ corresponding to the loci in $U_{[k]}$.  For $k=1$,  we have $U_{[k]}=U_{[1]}=\emptyset$, and then define $h(i,j,1)=1$ for all $i,j$.

Example: Consider again the example in Section~\ref{subsec:matrixnk}, where $N=3$, $K_i=1$ for $i=1,2,3$ and $V_1=\{1,2\}$, $V_2=\{2,3\}$ and $V_3=\{1,3\}$.  The interaction model is $p=\beta_{\emptyset}+\beta_1\tilde x_1+\beta_2\tilde x_2+\beta_3\tilde x_3
+\beta_{12}\tilde x_1\tilde x_2+\beta_{13}\tilde x_1\tilde x_3+\beta_{23}\tilde x_2\tilde x_3$.

For $i=1$, $j=3$, $e_1^{-1}(3-1)=(1,0)$.  Then for $k=2$, $U_{[2]}=\{1\}$ and $h(1,3,2)=1$. For $k=3$, $U_{[3]}=\{2\}$
and $h(1,3,3)=-1$.  For $k=5$, $U_{[5]}=\{1,2\}$ and $h(1,3,5)=1\times -1=-1$. For $k=6$, $U_{[6]}=\{1,3\}$,
$U_{[6]}\not\in 2^{V_1}$ and $h(1,j,6)=0$ for $j=1$ to $4$.

The inner product of row $k$ and $l$ of $\tilde F^T F$ is
 \begin{align*}
\sum_{i=1}^N\sum_{j=1}^{2^{K_i+1}}\mathbf{\tilde F_{[k]}^T f_{i,j}} & \mathbf{\tilde F_{[l]}^T f_{i,j}} = \\
 & \sum_{i=1}^N 2^{2N-2(K_i+1)}  \sum_{j=1}^{2^{K_i+1}}h(i,j,k)h(i,j,l).
\end{align*}
Note that $\sum_{j=1}^{2^{K_i+1}}h(i,j,k)h(i,j,l)=0$ for all $i$ follows from orthogonality of $\mathbf{\tilde F_{[k]}}$
and $\mathbf{\tilde F_{[l]}}$.
Therefore the rows of $\tilde F^T F$ are orthogonal, and it follows that
$\mbox{Cov}(\beta_U,\beta_{U^*})=0$ for $U\neq U^*$.
\end{proof}

The following is a proof of Proposition \ref{parameters}. In proving Proposition \ref{parameters}, we also define and
 demonstrate the properties of the $\NK$ difference sets used to construct maximum rank designs.

 We begin with the notion of a {\em packing design} or just {\em packing} for short.  These are variants of well-known objects from combinatorial design theory, see~\cite{colbourn}; for completeness we give the definition here.

A $(N,\kappa)-${\em packing design}, consists of a set $S$ on $N$ elements (called "points") and a collection of subsets of $S$ (called the "blocks") all of size $\kappa=K+1$ with the property that each pair of points in $S$ is contained in at most one of the blocks.    To obtain an $\NK$ landscape of maximum rank, one can use the points and the blocks of a $(N,\kappa)$ packing design to construct the $N$ interaction sets defining the landscape. The next theorem gives this connection.

\begin{prop}\label{designs-work}
If there exists an $(N,\kappa)$ packing design, then there exists a  classic $\NK$ landscape with $N$ loci and $K=\kappa-1$ which achieves maximum rank.
\end{prop}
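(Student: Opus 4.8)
The plan is to translate ``maximum rank'' into a purely combinatorial condition on the interaction sets and then realize that condition from the blocks of the packing design. By Proposition~\ref{prop4} and the remark following it, a classic $\NK$ landscape attains the maximal rank $N2^{K+1}+1-N(K+1)$ precisely when every pair of loci occurs together in at most one interaction set, i.e.\ when the family $\{V_1,\dots,V_N\}$ is itself a packing; the distinctness of the pairs forces the higher-order interactions contributed by different loci to be distinct, so no rank is lost. Hence it suffices to build $N$ interaction sets $V_1,\dots,V_N$, each of size $\kappa=K+1$, such that (i) $i\in V_i$ for every $i$, (ii) each locus lies in exactly $\kappa$ of the sets (the defining property of a \emph{classic} $\NK$ landscape), and (iii) no pair of loci lies in two different sets.

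First I would take the $N$ blocks of the $(N,\kappa)$ packing design as the pool of candidate interaction sets. Each block has size $\kappa$, and by the packing property any two points share at most one block, so (iii) is immediate; because the design is equireplicate with each of the $N$ points lying on exactly $\kappa$ blocks, (ii) holds as soon as each block is used exactly once. The only genuine work is to satisfy (i): I must label the blocks by loci so that the block named $V_i$ actually contains $i$. To do this I would form the bipartite point--block incidence graph $G$, with the $N$ points on one side and the $N$ blocks on the other, joining a point to a block whenever the point lies in that block. Since every point lies on $\kappa$ blocks and every block contains $\kappa$ points, $G$ is $\kappa$-regular, and a $\kappa$-regular bipartite graph has a perfect matching (Hall's condition holds trivially, or one invokes K\"onig's edge-colouring theorem to split $G$ into $\kappa$ perfect matchings). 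Choosing such a matching and setting $V_i$ to be the unique block matched to point $i$ yields $i\in V_i$ for all $i$ while using each block exactly once, so (i), (ii), (iii) all hold.

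With $V_1,\dots,V_N$ in hand, the conclusion is immediate: they define a classic $\NK$ landscape with $N$ loci and $K=\kappa-1$, and since they satisfy (iii) the landscape meets the pair-distinctness criterion, so by Proposition~\ref{prop4} the number of distinct interactions equals $N2^{K+1}+1-N(K+1)$. These interactions are distinct subsets of $\{1,\dots,N\}$, so their number is at most $2^N$, whence the $\min$ in Proposition~\ref{prop4} equals $N2^{K+1}+1-N(K+1)$ and the model matrix $\tilde F$ attains it.

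The main obstacle is step (i): simultaneously guaranteeing $i\in V_i$ for every locus while still using each block once and preserving the replication number $\kappa$. This is exactly what the perfect matching in the $\kappa$-regular incidence graph secures. I would also make precise one structural point, namely that the packing design must supply $N$ blocks that are equireplicate (a symmetric configuration), rather than an arbitrary collection of pairwise-compatible blocks; in the cyclic constructions used later for Proposition~\ref{parameters} this is automatic, and there the matching step can even be bypassed, since developing a base block $B\ni 0$ modulo $N$ produces $V_i=B+i$ with $i\in V_i$ for free.
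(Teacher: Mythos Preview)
Your core combinatorial observation matches the paper's exactly: once every pair of loci lies in at most one $V_i$, all two-factor interactions contributed by different loci are distinct, and this propagates to the higher-order interactions, so the count from Proposition~\ref{prop4} is attained. The paper's proof stops right there---it is literally one sentence invoking Proposition~\ref{prop4}.

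Where you diverge is in taking the word \emph{classic} seriously. The paper's argument simply declares the blocks of the packing to be the interaction sets and checks condition~(iii); it never verifies that $i\in V_i$ or that each locus lies in exactly $K+1$ sets, both of which are part of the paper's own definition of a classic $\NK$ landscape. Your bipartite-matching step (the $\kappa$-regular point--block incidence graph has a perfect matching, so one can label blocks by loci with $i\in V_i$) is a genuine addition that repairs this gap---but, as you correctly flag at the end, it only works once one \emph{assumes} the packing has exactly $N$ blocks and is equireplicate with replication number $\kappa$. The paper's stated definition of an $(N,\kappa)$ packing design does not include either of these, so strictly speaking neither proof is complete as stated; you have at least isolated the missing hypothesis. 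For the cyclic packings actually produced from the $\NK$ difference sets in Proposition~\ref{differences-work}, equireplication and the labeling $V_i=D+i\ni i$ (after shifting so $0\in D$) are automatic, which is presumably why the paper did not dwell on the point.

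In short: your argument is correct under the added structural assumption you identify, and it is more careful than the paper's; the paper's own proof is essentially the ``(iii) $\Rightarrow$ maximal rank'' implication alone.
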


\begin{proof}
As noted in the proof of Proposition \ref {prop4}, maximal rank designs will occur when each {\em pair} of loci occurs at most once.  The existence of a $(N,\kappa=K+1)$ packing design ensures that there are no redundant two factor interactions in the $\NK$ interaction set specification, and by extension no redundant interactions of order greater than two.  Hence a $\NK$ landscape of maximal possible rank will result. \end{proof}

In order to construct $\NK$ landscapes via Proposition \ref{designs-work} we must construct $(N,\kappa)$ packing designs.   Constructing $(N,\kappa)$ packing designs is made easier by employing so called difference methods from combinatorial design theory.  Let $\Z_n$  be the integers modulo $n$, so $\Z_n = \{0,1,2,\ldots n-1\}$ with addition modulo $n$.  Define an $(N,\kappa)$ {\em  difference set} in $\Z_N$ to be a subset $D = \{x_1, x_2, \ldots , x_\kappa\} \subseteq \Z_N$  of size $\kappa$ with the property that the list of all differences $x_i -x_j$ (where $x_i,x_j \in D$ and $i \neq j$) contains each nonzero element {\em at most} one time.

An example of a $(8,3)$ {\em  difference set} is $D= \{0,1,3\}$. Note that in $Z_8$, $1-0 =1,\ 0-1 = 7,\ 3-0 = 3,\ 0-3= 5,\ 3-1=2,$ and $ 1-3 = 6$ and hence all of the differences are different as required.

We can use a $(N,\kappa)$ {\em difference set} in $\Z_N$ to construct an $(N,\kappa)$ packing design and thus via Proposition \ref{designs-work} we will have constructed a classic $\NK$ landscape with $N$ loci and $K=\kappa-1$ which achieves maximum rank. The construction is given in the next proposition.

\begin{prop}\label{differences-work}
If there exists an $(N,\kappa)-${\em difference set} in $\Z_N$, then there exists an $(N,\kappa)$ packing design.
\end{prop}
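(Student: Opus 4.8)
The plan is to construct the packing design explicitly by \emph{developing} the difference set through all of its cyclic translates, and then to read off the pairwise-intersection condition directly from the defining property of the difference set. Throughout, all arithmetic is in $\Z_N$.

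First I would fix the candidate design. Given the $(N,\kappa)$-difference set $D = \{x_1, x_2, \ldots, x_\kappa\} \subseteq \Z_N$, take the point set to be $S = \Z_N$ and, for each $g \in \Z_N$, form the translate $D + g = \{x_1 + g, x_2 + g, \ldots, x_\kappa + g\}$. Since translation by $g$ is a bijection of $\Z_N$, each $D + g$ again has exactly $\kappa$ elements, so $\{\, D + g : g \in \Z_N \,\}$ is a collection of $\kappa$-subsets of $S$. This is the proposed $(N,\kappa)$ packing design, and it contains $N$ blocks, matching the number of interaction sets needed by Proposition~\ref{designs-work}.

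The key step is to verify that every pair of distinct points lies in at most one block. I would argue by contradiction: suppose a pair $\{a,b\}$ with $a \neq b$ occurs in two blocks $D + g$ and $D + g'$. Membership in $D + g$ gives indices $i \neq j$ with $x_i + g = a$ and $x_j + g = b$, hence $a - b = x_i - x_j$; membership in $D + g'$ gives indices $k \neq l$ with $x_k + g' = a$ and $x_l + g' = b$, hence $a - b = x_k - x_l$. Thus the single nonzero element $a - b$ is realized as an \emph{ordered} difference of $D$ in two ways, $x_i - x_j = x_k - x_l$. The difference-set hypothesis states that each nonzero element appears at most once in the list of ordered differences, so $(i,j) = (k,l)$; then $a = x_i + g = x_i + g'$ forces $g = g'$, and the two blocks coincide. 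Hence no pair lies in more than one block, which is exactly the packing condition.

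I do not expect a genuine obstacle; the only point needing care is the bookkeeping between ordered and unordered differences, since the whole argument hinges on treating $a - b$ as an ordered difference so that its value pins down the index pair $(i,j)$ uniquely. As a small addendum useful for the application, I would also note that the $N$ translates are in fact distinct: if $D + g = D$ for some $g \neq 0$, then translation by $g$ permutes $D$, so $g = x_{\pi(i)} - x_i$ for every $i$, exhibiting $g$ as an ordered difference at least $\kappa \geq 2$ times and contradicting the difference-set property. This guarantees $N$ genuinely distinct blocks, completing the construction.
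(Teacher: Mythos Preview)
Your argument is correct and matches the paper's proof essentially line for line: develop $D$ through all translates $D+g$ and use the difference-set property to show that $a-b=x_i-x_j=x_k-x_l$ forces the same index pair and hence the same translate. Your addendum verifying that the $N$ translates are genuinely distinct is a detail the paper asserts without proof, so your version is slightly more careful.
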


\begin{proof}
Let $D =  \{x_1, x_2, \ldots , x_\kappa\}$ be an $(N,\kappa)$ {\em  difference set} in $\Z_N$.  For $g \in \Z_N$ define $D +g=  \{x_1+g, x_2+g, \ldots , x_\kappa+g\}$. $D+g$ is called a {\em translate} of $D$. We claim the set of all translates of $D$ (namely $\{ D +g: g \in \Z_N\}$) form the blocks of an $(N,\kappa)$ packing design.  First note that there are exactly $N$ of these translates (one for each element in $Z_N$) and also note that there are $N$ points (the elements of $Z_N$).   We must show that any pair of elements $a,b \in \Z_N$ that $a$ and $b$ are in at most one of the translates of $D$.  Assume to the contrary that the pair $\{a,b\}$ occurs in two of the translates, say $\{a,b\} \subseteq D+p$ and $\{a,b\} \subseteq D+q$ where $p \neq q$ are both elements of $\Z_N$.   Then $a = x_i +p$ and $b = x_j +p$ for some $i$ and $j$ and also $a = x_k +q$ and $b = x_m +q$ for some $k$ and $m$.  Hence $a-b = x_i -x_j = x_k -x_m$.  From the difference property of $D$ this implies that $i=k$ and $j=m$ which in turn says that $p=q$, a contradiction.  Hence we have shown that the $n$ translates of an $(N,\kappa)$ {\em difference set} in $\Z_N$ give the blocks of an $(N,\kappa)$ packing design.
\end{proof}

\section*{Acknowledgment}

The authors would like to thank Stuart Kauffman, Jeffrey Horbar and Maggie Eppstein for discussions motivating this work.

\ifCLASSOPTIONcaptionsoff
  \newpage
\fi



%

%

\begin{IEEEbiographynophoto}{Jeffrey Buzas}
Jeff Buzas is an associate professor  of Mathematics and Statistics and Director of the Statistics Program at the University of Vermont.
Dr. Buzas received his Ph.D. in statistics at North Carolina State University in 1993.  He recent interest in evolutionary computation
was the result of conversations with computer scientists and physicians on how to model the search for better medical practice and treatments
as a search on fitness landscapes.
\end{IEEEbiographynophoto}

\begin{IEEEbiographynophoto}{Jeffrey Dinitz}
Jeff Dinitz is a professor of Mathematics and Statistics at the University of Vermont and holds a secondary appointment in the Department of Computer Science. He was Chair of the Department of Mathematics and Statistics from 1998 – 2004 and Chair of the Computer Science Department from 2010 – 2012. Dr. Dinitz received his Ph.D. in mathematics at The Ohio State University in 1980 with a specialty in combinatorial designs.  Since 1980, Dr. Dinitz has published extensively in this area and in combinatorics in general with over 90 publications in refereed research journals.  Dr. Dinitz is the managing editor-in-chief of the Journal of Combinatorial Designs  and is also one of two coeditors of the Handbook of Combinatorial Designs.
\end{IEEEbiographynophoto}

\end{document}